\newcommand{\ind}{\perp\!\!\!\!\perp} 
\def\eqref#1{equation~\ref{#1}}
\def\1{\bm{1}}
\DeclareMathAlphabet{\mathsfit}{\encodingdefault}{\sfdefault}{m}{sl}
\SetMathAlphabet{\mathsfit}{bold}{\encodingdefault}{\sfdefault}{bx}{n}
\definecolor{mydarkblue}{rgb}{0,0.08,0.45}
\newtheorem{definition}{Definition}
\newcommand*{\X}{\mathbf{X}}
\newcommand*{\x}{\mathbf{x}}
\newcommand{\indep}{\perp \!\!\! \perp}
\renewcommand{\parallel}{\mathrel{/\mkern-5mu/}}
\newcommand{\notparallel}{%
  \mathrel{\mathpalette\not@parallel\relax}%
}
\newcommand{\not@parallel}[2]{%
  \ooalign{\reflectbox{$\m@th#1\smallsetminus$}\cr\hfil$\m@th#1\parallel$\cr}%
}
\title{Generalized Precision Matrix for Scalable Estimation of Nonparametric Markov Networks}
\author{
~~Yujia Zheng$^{1, 2}$,~~Ignavier Ng$^1$,~~Yewen Fan$^1$,~~Kun Zhang$^{1, 2}$\\
$^1$ Carnegie Mellon University\\
$^2$ Mohamed bin Zayed University of Artificial Intelligence\\
\texttt{\{yujiazh, ignavierng, yewenf, kunz1\}@cmu.edu}
}
\begin{document}
\doparttoc 
\faketableofcontents 


\maketitle

\begin{abstract}

A Markov network characterizes the conditional independence structure, or Markov property, among a set of random variables. Existing work focuses on specific families of distributions (e.g., exponential families) and/or certain structures of graphs, and most of them can only handle variables of a single data type (continuous or discrete). In this work, we characterize the conditional independence structure in \textit{general distributions for all data types} (i.e., continuous, discrete, and mixed-type) with a Generalized Precision Matrix (GPM). Besides, we also allow \textit{general functional relations among variables}, thus giving rise to a Markov network structure learning algorithm in one of the most general settings. To deal with the computational challenge of the problem, especially for large graphs, we unify all cases under the same umbrella of a regularized score matching framework. We validate the theoretical results and demonstrate the scalability empirically in various settings.


\end{abstract}
\section{Introduction} \label{sec:introduction}

\textit{Markov networks} (also known as \textit{Markov random fields}) represent conditional dependencies among random variables. They provide clear semantics in a graphical manner to cope with uncertainty in probability theory, with a wide application in fields including physics \citep{cimini2019statistical}, chemistry \citep{dodani2016discovery}, biology \citep{jaimovich2006towards}, and sociology \citep{carrington2005models}. The undirected nature of edges also allows cyclic, overlapping, or hierarchical interactions \citep{shen2009detect}. To estimate the Markov network from observational data, existing work focuses on certain parametric families of distributions, a majority of 
which study the Gaussian case. By assuming that the variables are from a multivariate Gaussian distribution, the dependencies can be well represented by the support of the precision, or inverse covariance, matrix according to Hammersley-Clifford theorem \citep{besag1974spatial, grimmett1973theorem}. Together with various statistical estimators (e.g., the \textit{graphical lasso} \citep{friedman2008sparse} and \textit{neighborhood selection} \citep{meinshausen2006high}), this connection between the precision matrix and graphical structure has been well exploited in the Gaussian case in the past decades \citep{yuan2010high, ravikumar2011high}. However, methods for Gaussian graphical models fail to correctly capture dependencies among variables deviating from Gaussian or including nonlinearity \citep{raskutti2008model, ravikumar2011high}.

\looseness=-1
While non-Gaussianity is more common in real-world data generating process, few results are applicable to Markov network structure learning on non-Gaussian data. In the \textit{discrete} setting, \citet{ravikumar2010high} showed that a binary Ising model can be recovered by neighborhood selection using $\ell_1$ penalized logistic regression. \citet{Loh2013structure} encoded extra structural relations in the proposed generalized covariance matrix to model the dependencies for Markov networks with certain structures (e.g., tree structures or graphs with only singleton separator sets) among variables from exponential families. Several approaches allowed estimation for non-Gaussian continuous variables while most of them assumed parametric assumptions such as the exponential families \citep{yang2015graphical, lin2016estimation, suggala2017expxorcist} or Gaussian copulas \citep{liu2009nonparanormal, liu2012transelliptical,harris2013pc}. These methods illustrate the possibility of reliable Markov network estimations in several non-Gaussian cases, but still, the models are restricted to specific parametric families of distributions and/or structures of conditional independencies.

Concerned with describing Markov properties of non-Gaussian data with general \textit{continuous} distributions, \citet{morrison2017beyond} used the second-order derivatives to encode the conditional independence structure. Specifically, their approach is based on a theorem that the zero pattern in the Hessian matrix of the log-density determines the conditional independencies between non-Gaussian continuous variables \citep{spantini2018inference}. A method based on transport map, i.e., Sparsity Identification in Non-Gaussian distributions (SING) \citep{baptista2021learning}, is then designed to estimate the data density from samples, and the structure is derived from the estimated density. This approach achieves consistent Markov network structure recovery in a general non-Gaussian continuous setting. However, methods relying on the Hessian matrix cannot cope with discrete or mixed-type data. In addition, density estimation, especially for non-Gaussian data, can be computationally challenging for large graphs, limiting the scalability of this approach. Kernel-based Conditional Independence test (KCI) \citep{zhang2012kernel} and Generalized Score (GS) \citep{huang2018generalized} can handle the mixed-type case for structure learning, but as kernel-based methods, they are computationally challenging since the complexity scales cubically in the number of samples.


\looseness=-1
To deal with these remaining obstacles, we explore a Generalized Precision Matrix (GPM) for nonparametric Markov networks learning. Based on the necessary and sufficient conditions for the conditional independence among structures in continuous, discrete, and mixed-type cases, GPM characterizes the Markov network structures with arbitrary data types. Moreover, our work does not constrain the distribution to be of specific families, such as exponential families, or has been normalized. Besides, it is also noteworthy that there are no specific assumptions on the functional relations among variables. To the best of our knowledge, the proposed GPM illustrates the feasibility of Markov network structure learning in one of the most general nonparametric settings.


\looseness=-1
Furthermore, we put all these cases under the same umbrella of the estimation framework based on regularized score matching, as an extension of the score matching framework \citep{hyvarinen2005estimation}. Different from the previous approach (SING) that applies a transport map to estimate the data density for general continuous distributions, our framework allows us to only estimate the model score function parameterized by a deep model, from which the characterization matrix of the Markov network structure can be directly calculated. To facilitate the estimation process, we also exploit suitable penalties on the characterization matrix to encourage constantly sparse entries. Besides, we adopt recent advancements on score matching \citep{song2020sliced} to further scale up the process. Our method therefore narrows the gap between reliable structure learning and scalable deep learning techniques. We validate the theoretical results experimentally, and the scalability has been illustrated.

\vspace{-0.1em}
\section{Generalized Precision Matrix} \label{sec:3}
\vspace{-0.1em}

\looseness=-1
Suppose that we observe a collection of random variables $\mathbf{X}=(X_1,\dots,X_d)$. Our goal is to discover the underlying Markov network structure. Specifically, it is an undirected graph $\mathcal{G}$ comprising a set of vertices  $\mathbf{V}=\{1, \ldots, d\}$ and edges $\mathbf{E}$. The edges $\mathbf{E}$ encode the conditional independence relations or the global Markov property: for any disjoint subsets $\mathbf{A}$, $\mathbf{B}$, and $\mathbf{C}$ in the vertices set $\mathbf{V}$ such that $\mathbf{C}$ separates $\mathbf{A}$ and $\mathbf{B}$, $\mathbf{X_A}$ and $\mathbf{X_B}$ are conditionally independent given $\mathbf{X_C}$, i.e., $\mathbf{X_A} \ind \mathbf{X_B} \mid \mathbf{X_C}$.\footnote{For any set $\mathbf{S}\subset \mathbf{V}$, we write $\mathbf{X}_\mathbf{S}=\{X_i:i\in \mathbf{S}\}$.} Throughout this paper, we use an uppercase letter to denote a random variable and a lowercase letter with subscripts to denote the value of a random variable (e.g., $X_i = x_{i}$ for the value of $X_i$). For a discrete variable, say $X_i$, we denote its support by $\{x_{i1},\dots,x_{iM_i}\}$, where $M_i$ is 
its cardinality. 


As an alternative characterization of the conditional independence relations encoded by the graph, the pairwise Markov property requires that every pair of non-adjacent variables in the graph is conditionally independent given the remaining variables. That is, for any $i \neq j$, an edge between $X_i$ and $X_j$ is absent if and only if $X_i$ and $X_j$ are conditionally independent given the remaining variables, i.e., $X_i \ind X_j \mid \mathbf{X}_{\mathbf{V} \backslash \{i,j\}}$. The conditioning set consisting of all remaining variables is essential. According to \citet{lauritzen1996graphical}, the pairwise Markov property is equivalent to the global one when the density is strictly positive. In order to estimate nonparametric Markov networks in this setting, we explore generalized characterizations of conditional independence in all types of data (i.e., continuous, discrete, and mixed-type) without distributional constraints. We start from learning conditional independence structures in continuous data with a procedure inspired by \citet{spantini2018inference}, and then propose new characterizations for discrete and mixed-type data.

Ideally, we aim to construct a Generalized Precision Matrix $\Omega$ that satisfies the following desiderata:


\begin{enumerate}[label=\alph*.,ref=\alph*]
    \vspace{-0.3em}
    \item For any $i \neq j$, if $\Omega_{i,j}=0$, then $X_i \ind X_j \mid X_{\mathbf{V} \backslash \{i,j\}}$; \label{item:a}
    \item The probability measure is \textit{not restricted} to be from specific families but only needs to be strictly positive; \label{item:b}
    \item The undirected graph $\mathcal{G}$ is \textit{not restricted} to be of certain structures; \label{item:c}
    \item For continuous variables, the density has continuous derivatives up to second order w.r.t. the Lebesgue measure; \label{item:d}
    \item For discrete variables, the cardinality is not restricted; \label{item:e}
    \item To enable practical estimation procedure, $\Omega$ is differentiable w.r.t. $\mathbf{X}$. \label{item:f}
    \vspace{-0.3em}
\end{enumerate}

Property (\ref{item:a}) is the characterization of the pairwise Markov property. Properties (\ref{item:b}) and  (\ref{item:c}) differentiate our work from most previous works that assumes Gaussianity or/and certain structures of the conditional independence. Properties (\ref{item:d}) and (\ref{item:e}) further raise the difficulty of our task, because, in addition to not being restricted to a specific family of distributions, our characterization $\Omega$ has to be available for all data types (i.e., continuous, discrete, and mixed-type) with mild assumptions. For discrete variables, Property (\ref{item:e}) removes the limitation of cardinality, thus differentiating our work from those focusing on the binary Ising model. Property (\ref{item:f}) allows us to incorporate an $\ell_1$ regularization term in the estimation procedure and make use of gradient-based optimization.

\vspace{-0.1em}
\subsection{Characterization for continuous data} \label{sec:cc}
\vspace{-0.1em}

We aim to find the necessary and sufficient conditions for $X_i \ind X_j \mid \mathbf{X}_{\mathbf{V} \backslash \{i,j\}}$. By definition, if $X_i$ is conditionally independent of $X_j$ given all remaining variable $\mathbf{X}_{\mathbf{V} \backslash \{i,j\}}$, we can factor the probability density function (PDF) $p_{\mathbf{X}}$ as follows
\begin{equation} \label{eq:1}
    p_{\mathbf{X}}(\mathbf{x}) = p(x_i \mid \mathbf{x}_{\mathbf{V} \backslash \{i,j\}}) p(x_j \mid \mathbf{x}_{\mathbf{V} \backslash \{i,j\}}) p(\mathbf{x}_{\mathbf{V} \backslash \{i,j\}}).
\end{equation}
Together with the assumption that $p_{\mathbf{X}}$ has continuous derivatives up to second order w.r.t. the Lebesgue measure, we have
\begin{equation} \label{eq:cross_de}
    \frac{\partial^2 \log p_{\mathbf{X}}}{\partial{x_i} \partial{x_j}} = 0.
\end{equation}
Conversely, the solution of Eq. \eqref{eq:cross_de} is given by $\log p_{\mathbf{X}}(\mathbf{x}) = g(x_{1: i-1}, x_{i+1: d}) + h(x_{1: j-1}, x_{j+1: d})$ for some functions $g, h: \mathbb{R}^{d-1} \rightarrow \mathbb{R}$. It thus follows $X_i \ind X_j \mid \mathbf{X}_{\mathbf{V} \backslash \{i,j\}}$. This connection between pairwise conditional independence and cross derivatives of the log density has been observed in \citet{spantini2018inference}. Methods based on this connection have also been proposed recently \citep{morrison2017beyond, baptista2021learning}. Following \citet{baptista2021learning}, one can characterize the conditional independence between $X_i$ and $X_j$ in the continuous distribution as
\begin{equation} \label{eq:omega_c}
    \Omega^{[c]}_{i j} \coloneqq \left(\mathbb{E}_{p_{\mathbf{X}}} \left[f^{[c]}_{i,j}( \mathbf{x})^2\right]\right)^{\frac{1}{2}},
\end{equation}
\looseness = -1
where $f^{[c]}_{i,j}(\cdot)$ denotes the LHS of Eq. \eqref{eq:cross_de} and $[c]$ denotes continuous data as a type label. In practice, $p_{\mathbf{X}}$ is the empirical PDF. The group structure of it could help achieve simultaneous sparse approximation \citep{yuan2006model, huang2010benefit} when being applied as an $\ell_1$ regularizer in the estimation, which we will describe in Sec. \ref{sec:4}. We also apply the same group structures for both the discrete and mixed-type cases, but we will skip the reintroduction for brevity. The characterization of the Markov property is as follows.

\renewcommand{\theenumi}{\roman{enumi}.}%
\begin{restatable}{corollary}{MeasureContinuous}\label{coro:measure_continuous}
Assume
\begin{enumerate}
    \item $\mathbf{X}=(X_1,\dots,X_d)$ is a set of continuous variable.
    \item The PDFs of $\mathbf{X}$ are strictly positive and smooth.
    \item The characterization matrix $\Omega^{[c]}$ is defined according to Eq. \eqref{eq:omega_c}.
\end{enumerate}
Then for any $i \neq j$, $\Omega^{[c]}_{i,j} = 0$ implies $X_{i} \ind X_{j} \mid \mathbf{X}_{\mathbf{V} \backslash\{i, j\}}$.
\end{restatable}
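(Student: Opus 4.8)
The plan is to unwind the definition of $\Omega^{[c]}_{ij}$, reduce the hypothesis $\Omega^{[c]}_{ij}=0$ to the identical vanishing of the cross derivative $f^{[c]}_{i,j}$, and then invoke the converse solution structure of Eq.~\eqref{eq:cross_de} already recorded above to obtain the factorization in Eq.~\eqref{eq:1}, from which the conditional independence reads off immediately.

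First I would observe that, by Eq.~\eqref{eq:omega_c}, the hypothesis $\Omega^{[c]}_{ij}=0$ is equivalent to $\mathbb{E}_{p_{\mathbf{X}}}\!\left[f^{[c]}_{i,j}(\mathbf{x})^2\right]=0$, i.e. $\int f^{[c]}_{i,j}(\mathbf{x})^2\, p_{\mathbf{X}}(\mathbf{x})\, d\mathbf{x}=0$. The integrand is nonnegative, so a nonnegative integrable function with vanishing integral must vanish Lebesgue-almost-everywhere; hence $f^{[c]}_{i,j}(\mathbf{x})^2\, p_{\mathbf{X}}(\mathbf{x})=0$ for almost every $\mathbf{x}$. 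Because $p_{\mathbf{X}}$ is strictly positive, the density factor can be divided out, leaving $f^{[c]}_{i,j}(\mathbf{x})=0$ for almost every $\mathbf{x}$.

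Next I would upgrade this almost-everywhere conclusion to an everywhere one using smoothness: since $p_{\mathbf{X}}$ is strictly positive and smooth, $\log p_{\mathbf{X}}$ is twice continuously differentiable, so $f^{[c]}_{i,j}=\partial^2 \log p_{\mathbf{X}}/(\partial x_i\,\partial x_j)$ is continuous, and a continuous function vanishing on a set of full measure vanishes identically. Thus $f^{[c]}_{i,j}\equiv 0$, which is exactly Eq.~\eqref{eq:cross_de}. Applying the converse already stated in the text, the general solution of Eq.~\eqref{eq:cross_de} is $\log p_{\mathbf{X}}(\mathbf{x})=g(x_{1:i-1},x_{i+1:d})+h(x_{1:j-1},x_{j+1:d})$; exponentiating and collecting the factors reproduces the factorization of Eq.~\eqref{eq:1}, from which $X_i \ind X_j \mid \mathbf{X}_{\mathbf{V}\backslash\{i,j\}}$ follows by the definition of conditional independence.

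The step I expect to require the most care is the passage from $\mathbb{E}_{p_{\mathbf{X}}}[f^2]=0$ to $f\equiv 0$, since it is precisely here that both halves of the positivity-and-smoothness hypothesis are genuinely needed: strict positivity is what lets us discard the density and land on an almost-everywhere statement about $f^{[c]}_{i,j}$ itself, while continuity of the second derivative is what promotes that almost-everywhere statement to the pointwise identity required before the separable-solution argument can be applied. Everything after this reduction is bookkeeping given the converse recorded in the excerpt; I would only double-check that the additive decomposition indeed respects the index ranges, so that $x_i$ and $x_j$ never appear together in the same summand, which is what ultimately guarantees the desired factorization.
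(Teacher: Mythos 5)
Your proof is correct and takes essentially the same approach as the paper: reduce $\Omega^{[c]}_{i,j}=0$ to the identical vanishing of the cross-derivative $f^{[c]}_{i,j}$, then invoke the equivalence from Sec.~\ref{sec:cc} between Eq.~\eqref{eq:cross_de} and the factorization in Eq.~\eqref{eq:1}. You are in fact more careful than the paper's one-line argument (``it is clear that\ldots''), since you spell out the measure-theoretic step --- strict positivity to divide out the density, continuity of the second derivative to upgrade almost-everywhere vanishing to everywhere --- that the paper leaves implicit.
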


\looseness = -1
The proof is shown in Appx. \ref{sec:proof_measure_continuous}. It is worth noting that Cor. \ref{coro:measure_continuous} also covers the Gaussian case, where the cross-derivatives of the log-density correspond to entries in the precision or inverse covariance matrix \citep{drton2008lectures}, thus generalizing previous work assuming Gaussianity. Hence, the support of $\Omega$ characterizes conditional independence among continuous variables for general distributions. 


\vspace{-0.1em}
\subsection{Characterization for discrete data}
\vspace{-0.1em}

Since most of the previous work focuses on the Gaussian setting, and works for non-Gaussian distribution are mostly restricted to the exponential family, the characterization for continuous data discussed in Sec. \ref{sec:cc} has broadened the scope of reliable Markov network learning. However, the characterization is not applicable to discrete data as the gradient does not exist. In this section, we provide such a characterization of Markov network structure in the discrete case. Similar to the continuous case, a key ingredient of the proposed characterization is the
necessary and sufficient conditions of conditional independence for discrete data, which we establish in the following theorem.

\begin{restatable}{theorem}{CondIndCrossDerivDiscrete}\label{thm:discrete_measure}
Denote $\mathbf{V}$ as a set of discrete variables and ${X_i, X_j} \in \mathbf{V}$. For brevity, denote $\mathbf{V}\backslash\{X_i,X_j\}$ as $\mathbf{Z}$. Let $\{x_{i1},\dots,x_{iM_i}\}$ and $\{x_{j1},\dots,x_{jM_j}\}$ be the support of variables $X_i$ and $X_j$. Denote $z$ as any value(s) of $\mathbf{Z}$. Then, $X_i \indep X_j \mid \mathbf{Z}$ if and only if, for all $k\in[M_i]$ and $l\in[M_j]$ with $k\neq 1$ and $l \neq 1$, we have
\begin{equation} \label{eq:omega_d_ori}
    \left(\log m(x_{i1},x_{j1},z) - \log m(x_{ik},x_{j1},z)\right)- \left(\log m(x_{i1},x_{jl},z) - \log m(x_{ik},x_{jl},z)\right) = 0.
\end{equation}
\end{restatable}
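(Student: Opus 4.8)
The plan is to read $m(x_i,x_j,z)$ as the joint probability mass function $p(X_i=x_i,X_j=x_j,\mathbf{Z}=z)$ (strict positivity, assumed throughout the paper, guarantees every $\log m$ is well defined) and to recognize the quantity displayed in Eq.~\eqref{eq:omega_d_ori} as a \emph{discrete mixed second difference} of $\log m$ in the $(X_i,X_j)$ coordinates, anchored at the reference states $x_{i1},x_{j1}$. The whole statement then reduces to the classical fact that $X_i\ind X_j\mid\mathbf{Z}$ is equivalent to the conditional law $p(x_i,x_j\mid z)$ factorizing as a product of a function of $(x_i,z)$ and a function of $(x_j,z)$. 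So I would prove the two implications separately and, in each, translate between ``mixed difference vanishes'' and ``joint factorizes''.

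For the forward direction ($\Rightarrow$), I would assume $X_i\ind X_j\mid\mathbf{Z}$, write $m(x_{ik},x_{jl},z)=p(x_{ik}\mid z)\,p(x_{jl}\mid z)\,p(z)$, and substitute into the four $\log m$ terms. The logarithm turns each product into the sum $\log p(x_{ik}\mid z)+\log p(x_{jl}\mid z)+\log p(z)$, and in the alternating combination of Eq.~\eqref{eq:omega_d_ori} everything cancels: the $i$-dependent conditional terms cancel across the two $j$-values, the $j$-dependent terms cancel across the two $i$-values, and the $\log p(z)$ terms cancel by the $+{-}{-}+$ sign pattern. This is a short, purely mechanical calculation.

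For the converse ($\Leftarrow$), I would first rewrite the vanishing of Eq.~\eqref{eq:omega_d_ori} in exponential form, obtaining for every $k\neq 1,\ l\neq 1$
\[
 m(x_{ik},x_{jl},z)=\frac{m(x_{ik},x_{j1},z)\,m(x_{i1},x_{jl},z)}{m(x_{i1},x_{j1},z)}.
\]
A direct check shows this identity holds trivially when $k=1$ or $l=1$, so it is valid for all $k,l$. Setting $a_k(z)=m(x_{ik},x_{j1},z)$ and $b_l(z)=m(x_{i1},x_{jl},z)/m(x_{i1},x_{j1},z)$ yields the rank-one factorization $m(x_{ik},x_{jl},z)=a_k(z)\,b_l(z)$ for each fixed $z$. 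Normalizing over the $(k,l)$ block gives $p(x_{ik},x_{jl}\mid z)=\big(a_k(z)/\!\sum_{k'}a_{k'}(z)\big)\big(b_l(z)/\!\sum_{l'}b_{l'}(z)\big)$, and marginalizing over $l$ (resp.\ $k$) identifies the two factors as $p(x_{ik}\mid z)$ and $p(x_{jl}\mid z)$; hence $p(x_{ik},x_{jl}\mid z)=p(x_{ik}\mid z)\,p(x_{jl}\mid z)$, i.e.\ $X_i\ind X_j\mid\mathbf{Z}$.

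I expect the main obstacle to be the bookkeeping in the converse: one must justify extending the factorization identity from $k,l\neq 1$ to all indices, carry the normalization constant $\sum_{k',l'}a_{k'}b_{l'}=(\sum_{k'}a_{k'})(\sum_{l'}b_{l'})$ carefully so that the product of conditionals comes out exactly, and invoke strict positivity everywhere so that all logarithms and the division by $m(x_{i1},x_{j1},z)$ are legitimate. Conceptually this mirrors the continuous result of Sec.~\ref{sec:cc}: there the vanishing cross-derivative in Eq.~\eqref{eq:cross_de} forces $\log p_{\mathbf{X}}=g+h$ with $g,h$ each omitting one variable, whereas here the vanishing mixed difference forces the discrete analogue $\log m(x_i,x_j,z)=g(x_i,z)+h(x_j,z)$, which is exactly the additively separable form characterizing conditional independence.
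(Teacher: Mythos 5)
Your proof is correct, and your forward direction (conditional independence implies the vanishing mixed difference) is exactly the paper's ``necessary condition'' step: substitute the factorization $m(x_{ik},x_{jl},z)=p(x_{ik}\mid z)p(x_{jl}\mid z)p(z)$ and watch the logs cancel under the $+--+$ sign pattern. For the harder direction, however, you take a genuinely different route. The paper works additively in log space: it treats the first difference $g(x_{i1},x_{ik},\gamma)=\log m(x_{i1},\gamma)-\log m(x_{ik},\gamma)$ as a discrete score function, argues that the hypothesis forces $g$ not to depend on $x_j$, sums the resulting identities over all $M_i$ values of $X_i$ to exhibit $\log m$ as an average of a term free of $x_i$ plus a term free of $x_j$, and then concludes conditional independence from this additive separability (a step it asserts rather than carries out). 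You instead exponentiate the hypothesis into the odds-ratio identity $m(x_{ik},x_{jl},z)=m(x_{ik},x_{j1},z)\,m(x_{i1},x_{jl},z)/m(x_{i1},x_{j1},z)$, note it holds trivially when $k=1$ or $l=1$, read off the rank-one factorization $m(x_{ik},x_{jl},z)=a_k(z)b_l(z)$, and then normalize and marginalize to identify $a_k/\sum_{k'}a_{k'}$ and $b_l/\sum_{l'}b_{l'}$ as the conditional marginals $p(x_{ik}\mid z)$ and $p(x_{jl}\mid z)$. What your route buys: the implication ``factorization $\Rightarrow$ conditional independence'' is made fully explicit via the normalization bookkeeping, the argument is the classical self-contained contingency-table proof, and it avoids the paper's somewhat loose handling of the placeholder $\gamma$ and the averaging trick. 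What the paper's route buys: a closer structural parallel to the continuous case of Sec.~\ref{sec:cc}, where the vanishing cross-derivative forces $\log p_{\mathbf{X}}=g+h$, and to the mixed-type proof of Thm.~\ref{thm:mixed_measure}, which reuses the same score-function-plus-summation template. Both arguments need strict positivity in the same places (logarithms and the division by $m(x_{i1},x_{j1},z)$), which you flag correctly.
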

\textit{Proof sketch.} For the sufficient condition, we want to show that the general solution to Eq. \ref{eq:omega_d_ori} has no term that takes the values of both $X_i$ and $X_j$. We first iterate all possible differences w.r.t. $X_j$ to get the discrete score function of $X_i$, which does not take the value of $X_j$ as the argument. Then we obtain the desired solution by summation over all possible differences w.r.t. $X_i$. For the necessary condition, we decompose the PMF according to the conditional independence to obtain Eq. \ref{eq:omega_d_ori}.

The full proof is provided in Appx. \ref{sec:proof_discrete_measure}. Note that we denote $m(x_{ik},x_{jl},z)$ as the joint probability mass function (PMF) of $\{X_i, X_j, \mathbf{Z}\}$, simplified from $m_{X_i,X_j,\mathbf{Z}}(x_{ik},x_{jl},z)$. Based on Thm. \ref{thm:discrete_measure}, we propose the characterization matrix of conditional independence for discrete data $\Omega^{[d]}_{i,j}$ as follows:
\begin{equation} \label{eq:omega_d}
    \Omega^{[d]}_{i,j} \coloneqq \mathbb{E}_{m_{\mathbf{X}}}\left[\sum_{k,l} {f^{[d]}}(x_{i1}, x_{ik}, x_{j1}, x_{jl}, z)^2\right],
\end{equation}
where ${f^{[d]}}(x_{i1}, x_{ik}, x_{j1}, x_{jl}, z)$ denotes the LHS of Eq. \eqref{eq:omega_d_ori} and $[d]$ is a type label denoting discrete data. The support of the matrix above satisfies the pairwise Markov property and characterizes the Markov network structure, formally stated below
with its proof in Appx. \ref{sec:proof_measure_discrete}.

\renewcommand{\theenumi}{\roman{enumi}.}%
\begin{restatable}{corollary}{MeasureDiscrete}\label{coro:measure_discrete}
Assume
\begin{enumerate}
    \item $\mathbf{X}=(X_1,\dots,X_d)$ is a set of discrete variable.
    \item The PMFs of $\mathbf{X}$ are strictly positive.
    \item The characterization matrix $\Omega^{[d]}$ is defined according to Eq. \eqref{eq:omega_d}.
\end{enumerate}
Then for any $i \neq j$, $\Omega^{[d]}_{i,j} = 0$ implies $X_{i} \ind X_{j} \mid X_{\mathbf{V} \backslash\{i, j\}}$.
\end{restatable}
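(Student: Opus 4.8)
The plan is to deduce the corollary directly from Theorem~\ref{thm:discrete_measure}, whose sufficiency direction already converts the vanishing of the expression in \eqref{eq:omega_d_ori} into conditional independence; the only real work is to show that the hypothesis $\Omega^{[d]}_{i,j}=0$ forces that expression to vanish at every relevant configuration. First I would unpack the definition in \eqref{eq:omega_d}. Since the summand $\sum_{k,l} f^{[d]}(x_{i1},x_{ik},x_{j1},x_{jl},z)^2$ depends on $\mathbf{X}$ only through the value $z$ of $\mathbf{Z}=\mathbf{X}_{\mathbf{V}\backslash\{i,j\}}$ (the values $x_{i1},x_{ik},x_{j1},x_{jl}$ being merely enumerated by the double sum over the supports of $X_i$ and $X_j$), the expectation collapses to a marginal sum,
\begin{equation*}
\Omega^{[d]}_{i,j}=\sum_{z} m_{\mathbf{Z}}(z)\sum_{k,l} f^{[d]}(x_{i1},x_{ik},x_{j1},x_{jl},z)^2,
\end{equation*}
where $m_{\mathbf{Z}}$ denotes the marginal PMF of $\mathbf{Z}$.

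Next I would argue that each term vanishes. The inner double sum is a sum of squares, hence nonnegative, and strict positivity of the joint PMF (assumed in the corollary) implies $m_{\mathbf{Z}}(z)>0$ for every $z$ in the support of $\mathbf{Z}$. A weighted sum of nonnegative quantities with strictly positive weights can equal zero only if every quantity is zero; therefore $\Omega^{[d]}_{i,j}=0$ yields $f^{[d]}(x_{i1},x_{ik},x_{j1},x_{jl},z)=0$ for all $k\in[M_i]$, $l\in[M_j]$, and all $z$. The terms with $k=1$ or $l=1$ vanish identically and carry no information, but the surviving terms, namely those with $k\neq 1$ and $l\neq 1$, are exactly the quantities that \eqref{eq:omega_d_ori} requires to be zero.

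Finally I would invoke the ``if'' direction of Theorem~\ref{thm:discrete_measure}: having verified \eqref{eq:omega_d_ori} for all $k\neq 1$, $l\neq 1$, and every value $z$ of $\mathbf{Z}$, the theorem delivers $X_i \indep X_j \mid \mathbf{Z}$, which is precisely the claimed $X_i \ind X_j \mid X_{\mathbf{V}\backslash\{i,j\}}$. I do not expect a substantive obstacle, since the corollary is little more than a repackaging of the already-established Theorem~\ref{thm:discrete_measure}. The single delicate point is the passage from ``the expectation is zero'' to ``the integrand vanishes at every configuration,'' which is exactly where strict positivity is indispensable: without it one could only conclude vanishing on a set of full probability, and conditional independence could then fail on the excluded zero-probability configurations. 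A minor bookkeeping check is that the summation range over $k,l$ in \eqref{eq:omega_d} is consistent with the restricted range $k\neq 1,\,l\neq 1$ of Theorem~\ref{thm:discrete_measure}, which holds because the extra $k=1$ or $l=1$ terms are automatically zero.
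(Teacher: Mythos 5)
Your proposal is correct and follows essentially the same route as the paper's proof: unpack the definition in Eq.~\eqref{eq:omega_d}, observe that a vanishing expectation of a sum of squares forces $f^{[d]}(x_{i1},x_{ik},x_{j1},x_{jl},z)=0$ for all $k$, $l$, and $z$, and then invoke the sufficiency direction of Theorem~\ref{thm:discrete_measure}. If anything, your write-up is more careful than the paper's, which leaves implicit both the role of strict positivity in passing from zero expectation to pointwise vanishing and the harmless inclusion of the $k=1$ or $l=1$ terms.
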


\looseness=-1
Therefore, we have a characterization matrix $\Omega^{[d]}$ to represent the conditional independence structure for discrete data. It is worth noting that, unlike the generalized covariance matrix in \citet{loh2012structure} that only applies to certain structures among variables from exponential families, the proposed characterization matrix $\Omega^{[d]}$ encodes the Markov properties for general discrete distributions without any structural constraints. Also, compared with \citet{ravikumar2010high}, Thm. \ref{thm:discrete_measure} can be applied to general graphical models apart from binary Ising models and does not rely on the structural condition. It also does not limit the cardinalities of discrete variables. Hence, Theorem \ref{thm:discrete_measure} sheds light on characterizing arbitrary conditional independence structures for general discrete distributions.

\vspace{-0.2em}
\subsection{Characterization for mixed-type data}
\vspace{-0.1em}

In the previous sections, we have presented characterizations of conditional independence structures for both general continuous and discrete distribution. However, it is common for real-world datasets to have a mixture of continuous and discrete variables. Unfortunately, most works focus on either continuous or discrete data, and previous results for mixed-type data are mostly based on conditional Gaussian distribution \citep{lauritzen1989mixed, edwards1990hierarchical, lauritzen1996graphical, fellinghauer2013stable, lee2015learning, cheng2017high}. Similar to the continuous and discrete settings, in this section, we introduce a novel characterization of the pairwise Markov property for general distributions with mixed data-types. We first provide necessary and sufficient conditions of conditional independence for mixed-type data in the following theorem, with full proof given in Appx. \ref{sec:proof_mixed_measure}.

\begin{restatable}{theorem}{CondIndCrossDerivMixed}\label{thm:mixed_measure}
Denote $\mathbf{V}$ as a set of mixed-type variables and ${X_i, X_j} \in \mathbf{V}$, where $X_i$ is discrete and $X_j$ is continuous. Let $\{x_{i1},\dots,x_{iM_i}\}$ be the support of variables $X_i$. For brevity, denote $\mathbf{V}\backslash\{X_i,X_j\}$ as $\mathbf{Z}$. Denote $z$ as any value(s) of $\mathbf{Z}$ and $x_j$ as any value of the continuous variable $X_j$. Then, $X_i \indep X_j \mid \mathbf{Z}$ if and only if, for all $k\in[M_i]$ with $k\neq 1$, we have
\begin{equation}
    \begin{aligned}\label{eq:omega_m_ori}
        &\frac{\partial \log \left(p_{X_j, \mathbf{Z} \mid X_i}(x_j, z \mid x_{i1}) m_{X_i}(x_{i1})\right)}{\partial{x_j}} - \frac{\partial \log \left(p_{X_j,\mathbf{Z} \mid X_i}(x_j, z \mid x_{ik}) m_{X_i}(x_{ik})\right)}{\partial{x_j}} = 0.
    \end{aligned}
\end{equation}
\end{restatable}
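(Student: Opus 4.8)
The plan is to prove both implications by reducing the mixed condition to a factorization of the joint density--mass function, exactly paralleling the continuous cross-derivative argument behind Cor.~\ref{coro:measure_continuous} and the discrete difference argument of Thm.~\ref{thm:discrete_measure}. Throughout I write $m(x_i,x_j,z)\coloneqq p_{X_j,\mathbf{Z}\mid X_i}(x_j,z\mid x_i)\,m_{X_i}(x_i)$, which by the chain rule is just the joint density--mass of $(X_i,X_j,\mathbf{Z})$; strict positivity (needed for pairwise $=$ global Markov property) guarantees $m>0$, so $\log m$ and its $x_j$-derivative are well defined. In this notation the condition in \eqref{eq:omega_m_ori} reads $\partial_{x_j}\log m(x_{i1},x_j,z)=\partial_{x_j}\log m(x_{ik},x_j,z)$ for every $k$, i.e.\ the $x_j$-score of the joint does not depend on which value $X_i$ takes.

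For the necessary direction I would assume $X_i\indep X_j\mid\mathbf{Z}$ and factor the joint as $m(x_i,x_j,z)=p(x_i\mid z)\,p(x_j\mid z)\,p(z)$. Taking logarithms turns this product into a sum, and differentiating with respect to $x_j$ annihilates both the $p(x_i\mid z)$ term and the $p(z)$ term, leaving $\partial_{x_j}\log m(x_i,x_j,z)=\partial_{x_j}\log p(x_j\mid z)$, which is manifestly independent of the value of $X_i$. Subtracting the $x_{i1}$ and $x_{ik}$ instances then yields \eqref{eq:omega_m_ori}.

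For the sufficient direction I would run this backwards. The hypothesis says $\partial_{x_j}\bigl[\log m(x_{ik},x_j,z)-\log m(x_{i1},x_j,z)\bigr]=0$, so on the (connected) support of $X_j$ the bracket is constant in $x_j$; hence $\log m(x_{ik},x_j,z)-\log m(x_{i1},x_j,z)=c_k(z)$ for some function not depending on $x_j$. Exponentiating gives $m(x_{ik},x_j,z)=m(x_{i1},x_j,z)\,e^{c_k(z)}$, and since $m(x_{i1},x_j,z)$ depends only on $(x_j,z)$ while $e^{c_k(z)}$ depends only on $(x_i,z)$, the joint separates as $m(x_i,x_j,z)=A(x_i,z)\,B(x_j,z)$. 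Finally I would invoke the standard factorization criterion: summing over $X_i$ and integrating over $X_j$ recovers $p_{X_j,\mathbf{Z}}=\bigl(\sum_{x_i}A\bigr)B$ and $m_{X_i,\mathbf{Z}}=A\bigl(\int B\,dx_j\bigr)$, from which a direct computation of $p(x_i\mid z)\,p(x_j\mid z)$ reproduces $p(x_i,x_j\mid z)$, i.e.\ $X_i\indep X_j\mid\mathbf{Z}$.

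The main obstacle is the sufficiency integration step: concluding that a vanishing $x_j$-derivative forces the bracket to be globally constant in $x_j$ requires the support of $X_j$ to be connected, which is exactly what strict positivity of the density supplies; one must also be careful that the ``integration constant'' $c_k(z)$ is allowed to depend on both the index $k$ and $z$, and that the subsequent factorization argument handles the mixed type correctly (a \emph{sum} over the discrete $X_i$ but an \emph{integral} over the continuous $X_j$ in the normalization). These are the only places where the discreteness of $X_i$ and the continuity of $X_j$ interact, and they are precisely what distinguishes this mixed case from the purely continuous and purely discrete theorems.
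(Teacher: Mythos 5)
Your proposal is correct and takes essentially the same route as the paper's proof: necessity is identical (factor the joint under $X_i\indep X_j\mid\mathbf{Z}$, take logs, differentiate in $x_j$ so the $x_i$- and $z$-terms vanish), and sufficiency likewise integrates the vanishing derivative over the connected support of $X_j$ to get an $x_j$-independent constant $c_k(z)$, yielding the factorization $m(x_i,x_j,z)=A(x_i,z)\,B(x_j,z)$ and hence conditional independence. The only difference is presentational: you exponentiate directly and spell out the factorization criterion (sum over $x_i$, integral over $x_j$), whereas the paper reaches the same factorization via a summation-averaging trick over all $M_i$ values of $X_i$ (mirroring its discrete-case proof) and leaves the final factorization-implies-independence step implicit, so your write-up is, if anything, slightly more explicit on that point.
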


\textit{Proof sketch.} Similar to the proof sketch of Thm. \ref{thm:discrete_measure}, we consider $X_i$ and $X_j$ separately to construct the desired general solution of Eq. \ref{eq:omega_m_ori} for the sufficient condition. For the necessary condition, we decompose the density function according to the conditional independence to obtain Eq. \ref{eq:omega_m_ori}.

Based on Thm. \ref{thm:mixed_measure}, we propose to characterize the conditional independence between $X_i$ and $X_j$ given all remaining variables with the GPM $\Omega^{[m]}$, of which the element is defined as
\begin{equation}\label{eq:omega_m}
    \Omega^{[m]}_{i,j} \coloneqq 
    \begin{cases}
    \mathbb{E}_{\pi_{\mathbf{X}}} \left[f^{[c]}_{i,j}( \mathbf{x})^2\right]& \textrm{~if~} X_i \in \mathbf{X}_c, X_j \in \mathbf{X}_c\vspace{1ex}\\
    \mathbb{E}_{\pi_{\mathbf{X}}} \left[\sum_{k,l} {f^{[d]}}(x_{i1}, x_{ik}, x_{j1}, x_{jl}, z)^2\right]& \textrm{~if~} X_i \in \mathbf{X}_d, X_j \in \mathbf{X}_d\vspace{1ex}\\
    \mathbb{E}_{\pi_{\mathbf{X}}} \left[\sum_{k} f^{[m]}(x_i, x_{j1}, x_{jk}, z)^2\right]& \textrm{~if~} X_i \in \mathbf{X}_c, X_j \in \mathbf{X}_d\vspace{1ex}\\
    \mathbb{E}_{\pi_{\mathbf{X}}} \left[\sum_{k} f^{[m]}(x_j, x_{i1}, x_{ik}, z)^2\right]& \textrm{~if~} X_i \in \mathbf{X}_d, X_j \in \mathbf{X}_c,\\
    \end{cases}
\end{equation}
where $f^{[m]}$ denotes LHS of Eq. \eqref{eq:omega_m_ori} and $\pi_{\mathbf{X}}$ is the probability function. The type label $[m]$ denotes mixed-type data. $\mathbf{X}_c$ and $\mathbf{X}_d$ are sets of continuous and discrete variables, respectively. Its characterization of Markov property is as follows

\renewcommand{\theenumi}{\roman{enumi}.}%
\begin{restatable}{corollary}{MeasureMixed}\label{coro:measure_mixed}
Assume
\begin{enumerate}
    \item $\mathbf{X}=(X_1,\dots,X_d)$ is a set of variables containing both continuous and discrete variables.
    \item For continuous variables, the PDFs are strictly positive and smooth.
    \item For discrete variables, the PMFs are strictly positive.
    \item The characterization matrix $\Omega^{[m]}$ is defined according to Eq. \eqref{eq:omega_m}.
\end{enumerate}
Then for any $i \neq j$, $\Omega^{[m]}_{i,j} = 0$ implies $X_{i} \ind X_{j} \mid \mathbf{X}_{\mathbf{V} \backslash\{i, j\}}$.
\end{restatable}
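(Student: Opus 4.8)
The plan is to prove the corollary by a four-way case analysis that matches the piecewise definition of $\Omega^{[m]}_{i,j}$ in Eq.~\eqref{eq:omega_m}, reducing each branch to a characterization already established earlier. The common first step in every case is to pass from the vanishing of the expectation to the pointwise vanishing of the relevant integrand. Each integrand is a (sum of) squared real-valued function(s), hence nonnegative, and the governing measure $\pi_{\mathbf{X}}$ has strictly positive density/mass everywhere by the assumptions of the corollary. Consequently $\Omega^{[m]}_{i,j}=0$ forces the integrand to vanish $\pi_{\mathbf{X}}$-almost everywhere, and strict positivity makes the support the whole (continuous or discrete) space, so the vanishing holds on the entire domain.

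First, for the continuous--continuous branch, $\Omega^{[m]}_{i,j}=\mathbb{E}_{\pi_{\mathbf{X}}}[f^{[c]}_{i,j}(\mathbf{x})^2]=0$ yields $f^{[c]}_{i,j}=0$ almost everywhere; since the density is smooth, $f^{[c]}_{i,j}$ is continuous, and strict positivity upgrades this to $f^{[c]}_{i,j}\equiv 0$ everywhere. This is exactly the hypothesis of Cor.~\ref{coro:measure_continuous}, whose conclusion gives $X_i \ind X_j \mid \mathbf{X}_{\mathbf{V} \backslash\{i,j\}}$. The discrete--discrete branch is analogous: $\mathbb{E}_{\pi_{\mathbf{X}}}[\sum_{k,l} f^{[d]}(\cdots)^2]=0$ together with strictly positive PMFs forces every summand to vanish, so $f^{[d]}(x_{i1},x_{ik},x_{j1},x_{jl},z)=0$ for all admissible $k,l$ and all $z$, which is precisely Eq.~\eqref{eq:omega_d_ori}; invoking the sufficiency direction of Thm.~\ref{thm:discrete_measure} (equivalently Cor.~\ref{coro:measure_discrete}) delivers the conditional independence.

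The two mixed branches are symmetric, so I would treat $X_i\in\mathbf{X}_c,\,X_j\in\mathbf{X}_d$ and note the other follows by relabeling $i\leftrightarrow j$. Here $\mathbb{E}_{\pi_{\mathbf{X}}}[\sum_k f^{[m]}(x_i,x_{j1},x_{jk},z)^2]=0$ under the mixed product measure (Lebesgue in the continuous coordinate, counting in the discrete coordinates) forces each $f^{[m]}$ to vanish on the support. Strict positivity again makes the support full, so the vanishing holds for every discrete level $x_{jk}$, and — by continuity of $f^{[m]}$ in the continuous argument, which follows from smoothness — at every continuous value and every $z$. This is exactly Eq.~\eqref{eq:omega_m_ori} (after matching the discrete/continuous roles to those in the theorem statement), so the sufficiency direction of Thm.~\ref{thm:mixed_measure} gives $X_i \ind X_j \mid \mathbf{X}_{\mathbf{V} \backslash\{i,j\}}$.

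The only nontrivial point — and the step I would write out most carefully — is this first reduction: justifying the passage from ``expectation of a nonnegative integrand equals zero'' to ``the integrand is identically zero on the support.'' In the continuous and mixed branches this requires combining the standard fact that a nonnegative function with zero integral vanishes almost everywhere with continuity and strict positivity of the density, so that the almost-everywhere statement becomes a genuine everywhere statement that can feed into the solution-factorization arguments underlying Cor.~\ref{coro:measure_continuous} and Thm.~\ref{thm:mixed_measure}. Once this measure-theoretic bookkeeping is in place, the corollary is an immediate assembly of the three previously proved characterizations and requires no new computation.
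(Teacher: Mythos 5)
Your proposal is correct and follows essentially the same route as the paper's own proof: a four-way case split matching Eq.~\eqref{eq:omega_m}, with the continuous--continuous and discrete--discrete branches handed off to Cor.~\ref{coro:measure_continuous} and Cor.~\ref{coro:measure_discrete}, the mixed branch reduced to Thm.~\ref{thm:mixed_measure}, and the last case dispatched by symmetry. The only difference is that you spell out the measure-theoretic step (zero expectation of a nonnegative integrand plus strict positivity and smoothness gives pointwise vanishing), which the paper leaves implicit; this is a sound and welcome addition rather than a deviation.
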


\looseness=-1
The proof is given in Appx. \ref{sec:proof_measure_mixed}. The GPM $\Omega^{[m]}$ encodes the pairwise Markov property for mixed-type data. More general than previous works, it does not require specific families of distributions, structures of the underlying graph, or cardinality of discrete variables.

\vspace{-0.3em}
\section{Scalable estimation with regularized score matching}\label{sec:4}
\vspace{-0.1em}

\looseness=-1
In Sec. \ref{sec:3}, we provide characterizations of conditional independencies for general distribution in continuous, discrete, and mixed-type settings. Based on the introduced necessary and sufficient conditions, these characterizations generalize previous work and establish one of the foundations for nonparametric estimation of Markov network structures with minimal assumptions.

In addition to general characterizations of the Markov property with theoretical guarantees (i.e., GPM), a scalable estimation framework is necessary for reliable and practical structure learning. Ideally, we would like to exploit the advancements on scalable deep learning models. Hence, we introduce a regularized score matching-based framework for all considered settings (i.e., general distributions of continuous, discrete, and mixed-type variables).


\subsection{Estimation for continuous data}

We start with the continuous setting. Denote $p(\mathbf{x};\theta)$ as a parameterized density model with a parameter vector $\theta$. The goal is to estimate parameter $\theta$ from the observation $\mathbf{x}$. We aim to optimize the following objective function, which is based on Fisher divergence:
\begin{equation} \label{eq:ori_obj_continuous}
    O_c(\theta) = \frac{1}{2} \int_{\mathbf{x} \in \mathbb{R}^{d}} p(\mathbf{x}) \|\nabla_{\mathbf{x}}\log p(\mathbf{x} ; \theta) - \nabla_{\mathbf{x}}\log p(\mathbf{x})\|^2  d \mathbf{x} + \rho_\lambda( \Omega^{[c]}),
\end{equation}
where $\rho_\lambda(\cdot)$ denotes a sparsity penalty function and $\lambda$ is the penalty parameter with domain $[0,1]$. $\Omega^{[c]}$ is defined in Eq. \ref{eq:omega_c} as our characterization of the conditional independence structure for continuous data. If we assume the model is not degenerate, where different values of $\theta$ correspond to different PDFs, the asymptotic consistency of the optimization has been shown in Thm. 2 by \citet{hyvarinen2005estimation}. We impose a sparsity penalty to encounter for finite-sampling errors in practice.

Also with a strategy in \citet{hyvarinen2005estimation, pham1997blind}, one can remove the data log-density $\log p_{\mathbf{X}}$ from Eq. \eqref{eq:ori_obj_continuous} by optimizing the following equation, which is equivalent to Eq. \eqref{eq:ori_obj_continuous}:
\begin{equation}\label{eq:obj_after_trick_continuous}
    O_c(\theta) = \int_{\mathbf{x} \in \mathbb{R}^{d}} p(\mathbf{x}) \sum_{i=1}^{d}\left[\frac{1}{2}\|\nabla_{x_i}\log p(\mathbf{x} ; \theta)\|^2 +H_{x_i} (\log p(\mathbf{x}; \theta)) \right] d \mathbf{x} +  \rho_\lambda( \Omega^{[c]}),
\end{equation}
where $H$ denotes the Hessian. The proof is directly based on \citet{hyvarinen2005estimation} and we include it (Lemma \ref{thm:avoid_data_dens_est_continuous}) in Appx. \ref{sec:proof_avoid_data_dens_est_continuous} for completeness. It is worth noting that previous work on Markov network structure learning with general continuous distribution (SING \citep{morrison2017beyond, baptista2021learning}) applies a transport map to estimate data density from samples, which can be computationally challenging for non-Gaussian data with a large number of variables. Thus, it may not be scalable as suggested by Fig. \ref{fig:comparison_1} and Table \ref{tab:1}. To avoid this, the proposed regularized score matching allows us to optimize the objective function by only estimating the model score function. Moreover, the estimated model score function directly leads to the characterization matrix $\Omega^{[c]}$ by taking further derivatives, thus efficiently giving rise to the estimated Markov network structure. After training, the expectation in Equation 3 is computed over the parameterized model $p(\mathbf{x};\theta)$.

\vspace{-0.1em}
\subsection{Estimation for discrete data}
\vspace{-0.1em}

For the estimation in the discrete case, one cannot directly apply the method introduced for the continuous case since the gradient, on which the continuous score function is based, is not defined for discrete data. An intuitive solution is to replace the gradient with a general linear operator $\mathcal{L}$ \citep{lyu2012interpretation}. Of course, one also needs to replace integration with summation and PDF with PMF. For instance, Eq. \eqref{eq:ori_obj_continuous} can be reformulated as follows
\begin{equation} \label{eq:obj_generalized_linear_discrete}
    O_d(\theta) = \frac{1}{2} \sum_{\mathbf{x}} m_{\mathbf{X}}(\mathbf{x}) \left\|\frac{\mathcal{L}(m(\mathbf{x} ; \theta))}{m(\mathbf{x} ; \theta)} - \frac{\mathcal{L}(m_{\mathbf{X}}(\mathbf{x}))}{m_{\mathbf{X}}(\mathbf{x})}\right\|^2 + \rho_\lambda(\Omega^{[d]}),
\end{equation}
where $m$ denotes PMF. In this formulation, $\mathcal{L}(\cdot)$ is a generalized version of the score function for discrete data. As shown in \citet{lyu2012interpretation}, $O_d(\theta)$ keeps the computational advantages of score matching for continuous data, i.e., the normalizing partition is canceled out and the formulation can be transformed to an expectation of functions of the unnormalized model. In order to guarantee the consistency of score matching based on Eq. \eqref{eq:obj_generalized_linear_discrete}, the linear operator $\mathcal{L}(\cdot)$ needs to be \textit{complete} according to the following definition.

\begin{definition} [Completeness \citep{lyu2012interpretation}] \label{defn:complete}
A linear operator $\mathcal{L}(\cdot)$ is complete if $\frac{\mathcal{L}(p(\mathbf{x}))}{p(\mathbf{x})} = \frac{\mathcal{L}(q(\mathbf{x}))}{q(\mathbf{x})}$ implies $p(\mathbf{x}) = q(\mathbf{x})$ almost everywhere, where $p(\mathbf{x})$ and $q(\mathbf{x})$ are two PMFs.
\end{definition}

According to Defn. \ref{defn:complete}, \citet{lyu2012interpretation} used the marginalization operator $\mathcal{M}(\cdot) : \mathcal{F}^1 \mapsto \mathcal{F}^d$ as a choice for $\mathcal{L}(\cdot)$, which is defined as
\begin{equation}
\mathcal{M} (f(\mathbf{x}))=\left(\begin{array}{c}
\vdots \\
\mathcal{M}_i (f(\mathbf{x})) \\
\vdots
\end{array}\right)=\left(\begin{array}{c}
\vdots \\
\sum_{\mathbf{x}} f(\mathbf{x})  \\
\vdots
\end{array}\right),
\end{equation}
where $f \in \mathcal{F}^1$. We can observe that $\mathcal{M}_i (f(\mathbf{x}))$ is the marginal density of $\mathbf{x}^{\backslash i}$, where $\mathbf{x}^{\backslash i}$ denotes the vector $\mathbf{x}$ after dropping the $i$-th element (i.e., marginalization). The completeness of $\mathcal{M}(\cdot)$ has been shown in \citet{brook1964distinction}, and included as Lemma 3 in \citet{lyu2012interpretation}. We have 
\begin{equation} \label{eq:obj_generalized_margi_discrete}
    O_d(\theta) = \frac{1}{2} \sum_{\mathbf{x}} m_{\mathbf{X}}(\mathbf{x}) \left\|\frac{\mathcal{M}(m(\mathbf{x} ; \theta))}{m(\mathbf{x} ; \theta)} - \frac{\mathcal{M}(m_{\mathbf{X}}(\mathbf{x}))}{m_{\mathbf{X}}(\mathbf{x})}\right\|^2 + \rho_\lambda(\Omega^{[d]}).
\end{equation}
Thus, it is plausible for us to replace the gradient with $\mathcal{M}(\cdot)$ for discrete data. However, one key advantage of regularized score matching is that it does not have to explicitly estimate the data density (i.e., $p_X(\mathbf{x})$ in Theorem \ref{thm:avoid_data_dens_est_continuous}). As shown by \citet{lyu2012interpretation}, we can also optimize Eq. \eqref{eq:obj_generalized_margi_discrete} in a similar way, which is equivalent to optimizing the following equation
\begin{equation}\label{eq:obj_after_trick_discrete}
    O_d(\theta) = \frac{1}{2} \sum_{\mathbf{x}} m_{\mathbf{X}}(\mathbf{x}) \sum_{i=1}^{d}\left[\left(\frac{\mathcal{M}_i(m(\mathbf{x} ; \theta ))}{m(\mathbf{x}; \theta )}\right)^{2} - 2\mathcal{M}_i\left( \frac{\mathcal{M}_i(m(\mathbf{x} ; \theta ))}{m(\mathbf{x}; \theta )}\right)\right] + \rho_\lambda(\Omega^{[d]}).
\end{equation}
The simplification is directly from results in \citet{lyu2012interpretation}, of which the corresponding lemma (Lemma \ref{thm:avoid_data_dens_est_discrete}) is formalized with its proof in Appx. \ref{sec:proof_avoid_data_dens_est_discrete} for completeness. Based on Thm. \ref{thm:discrete_measure} and Thm. \ref{thm:avoid_data_dens_est_discrete}, similar to the continuous case, we can estimate Markov network structures for general distributions in the discrete setting under the same umbrella of regularized score matching.



\vspace{-0.1em}
\subsection{Estimation for mixed-type data}
\vspace{-0.1em}

For mixed-type data, we define the objective function as follows
\begin{equation}\label{eq:obj_mixed}
        O_m(\theta) = \mathbb{E}_{\pi_{\mathbf{X}}} \left[\sum_{i} s_i(\mathbf{x};\theta)\right] + \rho_\lambda(\Omega^{[m]}),
\end{equation}
where
\begin{equation}
    \begin{aligned}
         s_i(\mathbf{x};\theta) \coloneqq 
         \begin{cases}
         \frac{1}{2}\|\nabla_{x_i}\log \pi(\mathbf{x} ; \theta)\|^2 +H_{x_i} (\log \pi(\mathbf{x}; \theta))& X_i \in \mathbf{X}_c\vspace{1ex}\\  
         \frac{1}{2} \left(\frac{\mathcal{M}_i(m(\mathbf{x} ; \theta ))}{m(\mathbf{x}; \theta )}\right)^{2} - \mathcal{M}_i\left( \frac{\mathcal{M}_i(m(\mathbf{x} ; \theta ))}{m(\mathbf{x}; \theta )}\right)& X_i \in \mathbf{X}_d,
         \end{cases}
    \end{aligned}
\end{equation}
Here, the density $\pi$ is strictly positive. 
Basically, $O_m(\theta)$ is a regularized version of the combination of the objective functions for the continuous and discrete cases. 
Because $\Omega^{[m]}$ also encodes the dependencies between continuous and discrete variables, we can estimate its support for mixed-type data without assuming group structures of data types.
The following corollary guarantees the consistency, where we define $O'_m(\theta)$ as $O_m(\theta) - \rho_\lambda(\Omega^{[m]})$.

\renewcommand{\theenumi}{\roman{enumi}.}%
\begin{restatable}{corollary}{ConsistencyMixed} \label{thm:consistency_mixed}

Assume
\begin{enumerate}
    \item The data density $\pi_{\mathbf{X}}(\cdot)$ is equal to $\pi(\cdot;\theta^*)$ for some $\theta^*$.
    \item The data density $\pi_{\mathbf{X}}(\cdot)$ and model density $\pi(\cdot;\theta)$ are strictly positive. $\pi_{\mathbf{X}}(\cdot)$ and $\pi(\cdot;\theta)$ is differentiable and twice-differentiable, respectively, w.r.t. continuous variables. For some $\theta^*$, $\pi_{\mathbf{X}}(\cdot) = \pi(\cdot;\theta^*)$ and no other parameter value gives a density that is equal to $\pi(\cdot;\theta^*)$ almost everywhere.
    \item The expectations $E_{\pi_{\mathbf{X}}}\left[\|\log \pi(\mathbf{x} ; \theta)\|^{2}\right]$ and $E_{\pi_{\mathbf{X}}}\left[\left\|{\log \pi}_{\mathbf{X}}(\mathbf{x})\right\|^{2}\right]$ are finite for any $\theta$, and $\pi_{\mathbf{X}}(\mathbf{x}) \log \pi(\mathbf{x} ; \theta)$ goes to zero for any $\theta$ when $\|\mathbf{x}\| \rightarrow \infty$.
\end{enumerate}
Then $O'_m(\theta) = 0$ implies $\theta=\theta^*$.
\end{restatable}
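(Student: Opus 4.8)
The plan is to recognize $O'_m(\theta) = \mathbb{E}_{\pi_{\mathbf{X}}}[\sum_i s_i(\mathbf{x};\theta)]$ as the sum of a continuous score-matching objective over the continuous coordinates and a discrete (marginalization) score-matching objective over the discrete coordinates, reassemble each block into a nonnegative divergence, and then invoke the non-degeneracy hypothesis to conclude $\theta = \theta^*$. Concretely, I would first split the sum into $S_c(\theta) + S_d(\theta)$ according to whether $X_i \in \mathbf{X}_c$ or $X_i \in \mathbf{X}_d$, so that the two blocks can be handled by the two density-removal lemmas separately before being recombined.

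For the continuous block I would apply Lemma \ref{thm:avoid_data_dens_est_continuous} (the density-removal identity of \citet{hyvarinen2005estimation}) coordinatewise: integrating $\frac{1}{2}\|\nabla_{x_i}\log \pi(\mathbf{x};\theta)\|^2 + H_{x_i}(\log \pi(\mathbf{x};\theta))$ against $\pi_{\mathbf{X}}$ returns $\frac{1}{2}\mathbb{E}_{\pi_{\mathbf{X}}}\|\nabla_{x_i}\log \pi(\mathbf{x};\theta) - \nabla_{x_i}\log \pi_{\mathbf{X}}(\mathbf{x})\|^2$ up to an additive constant independent of $\theta$. Summing over continuous $i$ rewrites $S_c$ as a Fisher divergence over the continuous coordinates plus a $\theta$-independent constant. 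The regularity hypotheses in Assumption 3 — finite second moments of the scores together with the decay $\pi_{\mathbf{X}}(\mathbf{x})\log \pi(\mathbf{x};\theta)\to 0$ — are exactly what justifies the integration-by-parts step underlying that lemma.

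For the discrete block I would apply Lemma \ref{thm:avoid_data_dens_est_discrete} (the analogue from \citet{lyu2012interpretation}), which turns $\frac{1}{2}(\mathcal{M}_i(m(\mathbf{x};\theta))/m(\mathbf{x};\theta))^2 - \mathcal{M}_i(\mathcal{M}_i(m(\mathbf{x};\theta))/m(\mathbf{x};\theta))$, summed against $m_{\mathbf{X}}$, into $\frac{1}{2}\sum_{\mathbf{x}} m_{\mathbf{X}}(\mathbf{x})(\mathcal{M}_i(m(\mathbf{x};\theta))/m(\mathbf{x};\theta) - \mathcal{M}_i(m_{\mathbf{X}}(\mathbf{x}))/m_{\mathbf{X}}(\mathbf{x}))^2$ up to a $\theta$-independent constant, so that $S_d$ becomes a nonnegative marginalization-score discrepancy. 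Both reassembled terms are nonnegative, so once the constants are accounted for at the optimum, $O'_m(\theta) = 0$ forces both to vanish: the continuous part gives $\nabla_{x_i}\log \pi(\mathbf{x};\theta) = \nabla_{x_i}\log \pi_{\mathbf{X}}(\mathbf{x})$ on the continuous coordinates, and the discrete part gives $\mathcal{M}_i(m(\mathbf{x};\theta))/m(\mathbf{x};\theta) = \mathcal{M}_i(m_{\mathbf{X}}(\mathbf{x}))/m_{\mathbf{X}}(\mathbf{x})$ on the discrete coordinates. Using the completeness of $\mathcal{M}$ (Definition \ref{defn:complete}, from \citet{brook1964distinction}) together with the gradient-matching and strict positivity (Assumption 2), these conditions should jointly force $\pi(\cdot;\theta) = \pi_{\mathbf{X}}(\cdot)$ almost everywhere, whence the non-degeneracy clause of Assumption 2 yields $\theta = \theta^*$.

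The main obstacle I anticipate is precisely this last identification step: Hyvärinen's result only controls derivatives along the continuous axes and Lyu's result only controls marginalization differences along the discrete axes, so neither alone recovers the joint mixed-type law. The crux is a mixed-type completeness argument that interleaves the two — showing that matching gradients on $\mathbf{X}_c$ and marginalization scores on $\mathbf{X}_d$, under strict positivity, determines the joint density up to normalization. Secondary care is needed to verify that the additive constants produced by the two density-removal lemmas are genuinely independent of $\theta$, and that the integrability and decay conditions of Assumption 3 remain valid for the cross terms between the continuous and discrete blocks when the two objectives are combined.
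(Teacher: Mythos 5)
Your proposal takes essentially the same route as the paper's proof: decompose $O'_m$ by data type, invoke Lemma \ref{thm:avoid_data_dens_est_continuous} and Lemma \ref{thm:avoid_data_dens_est_discrete} to reassemble each block into a nonnegative score discrepancy that must vanish at the optimum, and then combine completeness of the gradient operator (Hyv\"arinen's Theorem 2) and of the marginalization operator $\mathcal{M}$ (Brook's lemma, Definition \ref{defn:complete}) with strict positivity and the non-degeneracy clause to get $\theta=\theta^*$. The ``mixed-type completeness'' obstacle you flag is genuine, but the paper's own proof does not treat it any more rigorously---it likewise just cites those two completeness results for the respective coordinate blocks---so your plan matches the paper's argument in both structure and level of detail.
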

Cor. \ref{thm:consistency_mixed} follows from Lemma \ref{thm:avoid_data_dens_est_continuous} \citep{hyvarinen2005estimation} and Lemma \ref{thm:avoid_data_dens_est_discrete} \citep{lyu2012interpretation}, which are included in Appx. \ref{sec:proof_consistency_mixed}. Together with Thm. \ref{thm:mixed_measure}, one can estimate Markov network structures for mixed-type data in a general setting.

\vspace{-0.1em}
\subsection{Sparsity regularization}
\vspace{-0.1em}

\looseness=-1
By minimizing the objective function $O(\theta)\in \{O_c(\theta),O_d(\theta),O_m(\theta)\}$, our goal is to essentially perform a model selection task, i.e., to learn of the support of $\Omega\in \{\Omega^{[c]},\Omega^{[d]},\Omega^{[m]}\}$. Here, using $\ell_0$ penalty may be computationally infeasible because it leads to a discrete optimization problem that is difficult to solve. Following previous works \citep{Robert1996lasso}, we adopt the $\ell_1$ regularizer $\rho_\lambda( \Omega)=\lambda\| \Omega\|_1$. In particular, the high-dimensional support recovery of $\ell_1$ regularizer has been extensively studied in the literature; for instance, see \citet{Wainwright2009sharp} for variable selection and \citet{Ravikumar2008model} for Gaussian graphical model selection. Although $\ell_1$ regularizer induces sparsity, it may lead to bias in the resulting solution and thereby worsen the performance \citep{fan2001variable,Breheny2011coordinate}. This is because the $\ell_1$ norm increases linearly with the absolute value of nonzero entries, which is different from $\ell_0$ norm that is constant for nonzero entries. Therefore, we experiment with smoothly clipped absolute deviation (SCAD) penalty \citep{fan2001variable}, minimax concave penalty (MCP) \citep{Zhang2010nearly}, and adaptive $\ell_1$ penalty \citep{zou2006adaptive} in this work, which helps remedy the bias issue of $\ell_1$ regularization. Specifically, SCAD and MCP penalties may be interpreted as a hybrid of $\ell_0$ and $\ell_1$ penalties, while adaptive $\ell_1$ penalty reweighs the penalty coefficient $\lambda$ by the initial estimate of $\Omega$ without regularization. Furthermore, the support recovery of $\ell_1$ penalty relies on the incoherence condition in various cases \citep{Wainwright2009sharp,Ravikumar2008model,ravikumar2011high}, which may be a rather strong assumption in practice, whereas the SCAD and MCP penalties do not \citep{Loh2017support}. Thus, we adopt the SCAD penalty according to experimental results (Fig. \ref{fig:comparison_sparsity_regularization} in Appx. \ref{sec:appx_exp}). We integrate the SCAD penalties for all cases but only introduced here for brevity.

\vspace{-0.7em}
\section{Experiments}\label{sec:experiments}
\vspace{-0.7em}


\textbf{Setup.} \ \ We conduct experiments on two sets of distributions: (1) \textit{Butterfly distributions} \citep{morrison2017beyond, baptista2021learning} and (2) \textit{distributions from random graphs}. For \textit{Butterfly distribution} in the continuous setting, we have $r$ i.i.d. pairs of random variables $(P_i, Q_i)$ defined as $P_i \sim \mathcal{N}(0,1)$ and $Q_i=W_i P_i$ with $W_i \sim \mathcal{N}(0,1)$ and $W_i \ind P_i$. 
We replace the Gaussian distribution with the Multinomial distribution for the discrete case and mix the two different types of pairs for the mixed-type case with uniformly sampled proportion.
For \textit{distributions from random graphs}, we first generate a random decomposable directed acyclic graph. Then, for the continuous case, the data are sampled from nonlinear structural equation models (SEMs) with exogenous noises from an exponential distribution. We employ a multilayer perceptron (MLP) with randomly generated weights as the nonlinear function. For the discrete case, variables
are generated via randomly parameterized Multinomial distributions of the variable being simulated and the discrete parents \citep{andrews2018scoring}. For the mixed-type case, we simulate data with the process described in \citet{andrews2018scoring}, of which the details are included in Appx. \ref{sec:appx_exp}. Finally, we moralize all random decomposable DAGs to obtain the ground-truth Markov network structures. We use the deep kernel exponential family (DKEF) during estimation and optimize the objective function by gradient descent with the Adam optimizer. All experiments are on 12 CPU cores with 24 GB RAM. 


\textbf{Considered methods.} \ \
We consider the following representative methods for comparison: \textit{\textbf{(KCI)}}
We adopt Kernel-based Conditional Independence test (KCI) \citep{zhang2012kernel} with the Incremental Association Markov Blanket (IAMB) algorithm \citep{tsamardinos2003algorithms} to learn the Markov network structure in our settings. \textit{\textbf{(GS)}} We denote GS as Greedy Equivalence Search (GES) \citep{Chickering2002optimal} with Generalized Score (GS). 
Because this procedure estimates causal structures represented by completed partially DAGs (CPDAGs), we moralize the results to obtain the Markov network structures. \textit{\textbf{(SING)}} Sparsity Identification in Non-Gaussian distributions (SING) \citep{morrison2017beyond, baptista2021learning} is an algorithm designed for the estimation of Markov networks in non-Gaussian continuous distributions. It applies a transport map to estimate the data density. \textit{\textbf{(GLASSO)}} Graphical Lasso (GLASSO) is a classical sparse penalized estimator for the inverse covariance matrix.
\textit{\textbf{(NPN)}} GLASSO with the nonparanormal transformation \citep{liu2009nonparanormal}.

\begin{figure}[tb]
    \centering 
\begin{subfigure}{0.33\textwidth}
  \includegraphics[width=\linewidth]{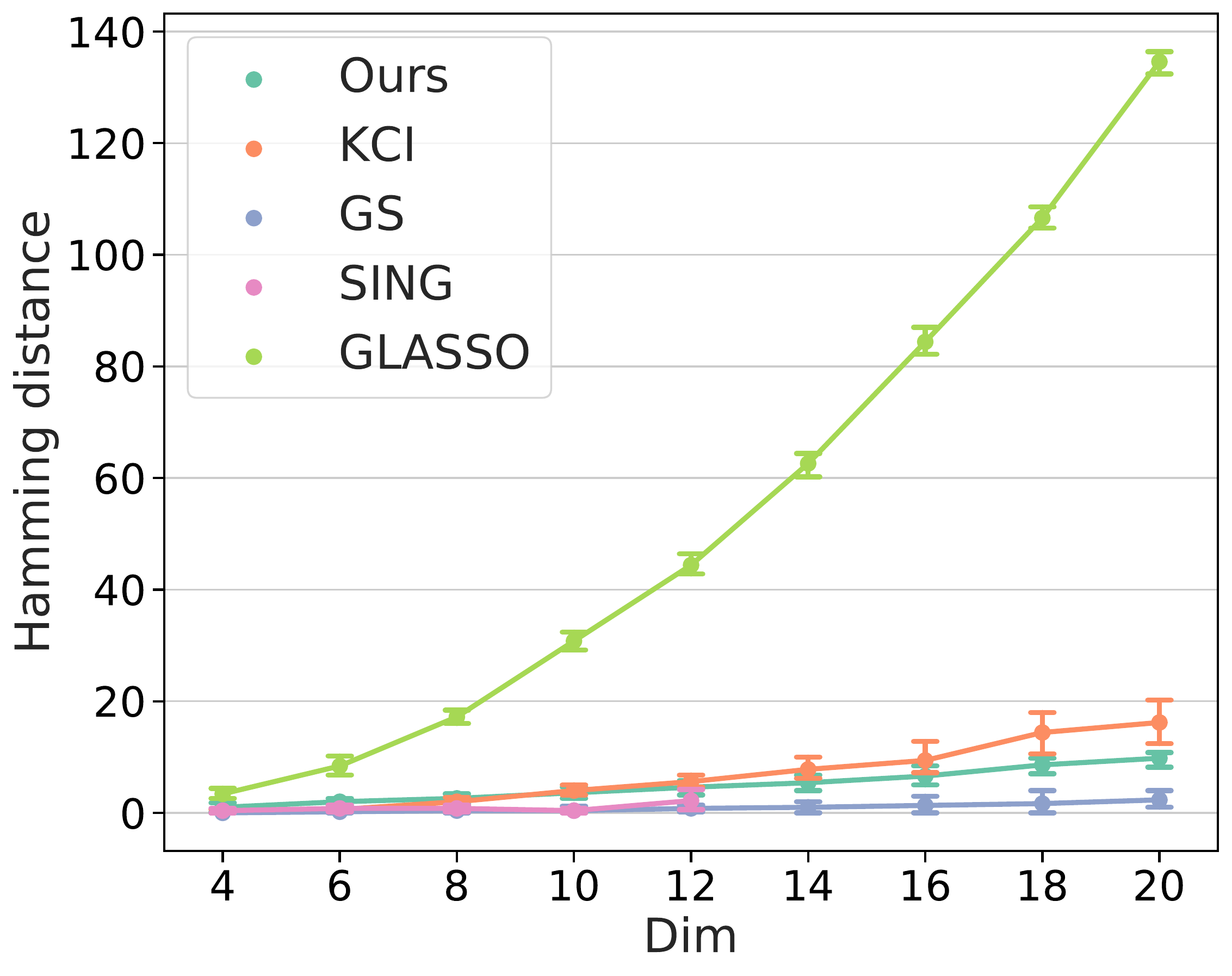}
  \caption{Continuous}
  \label{fig:1}
\end{subfigure}\hfil 
\begin{subfigure}{0.33\textwidth}
  \includegraphics[width=\linewidth]{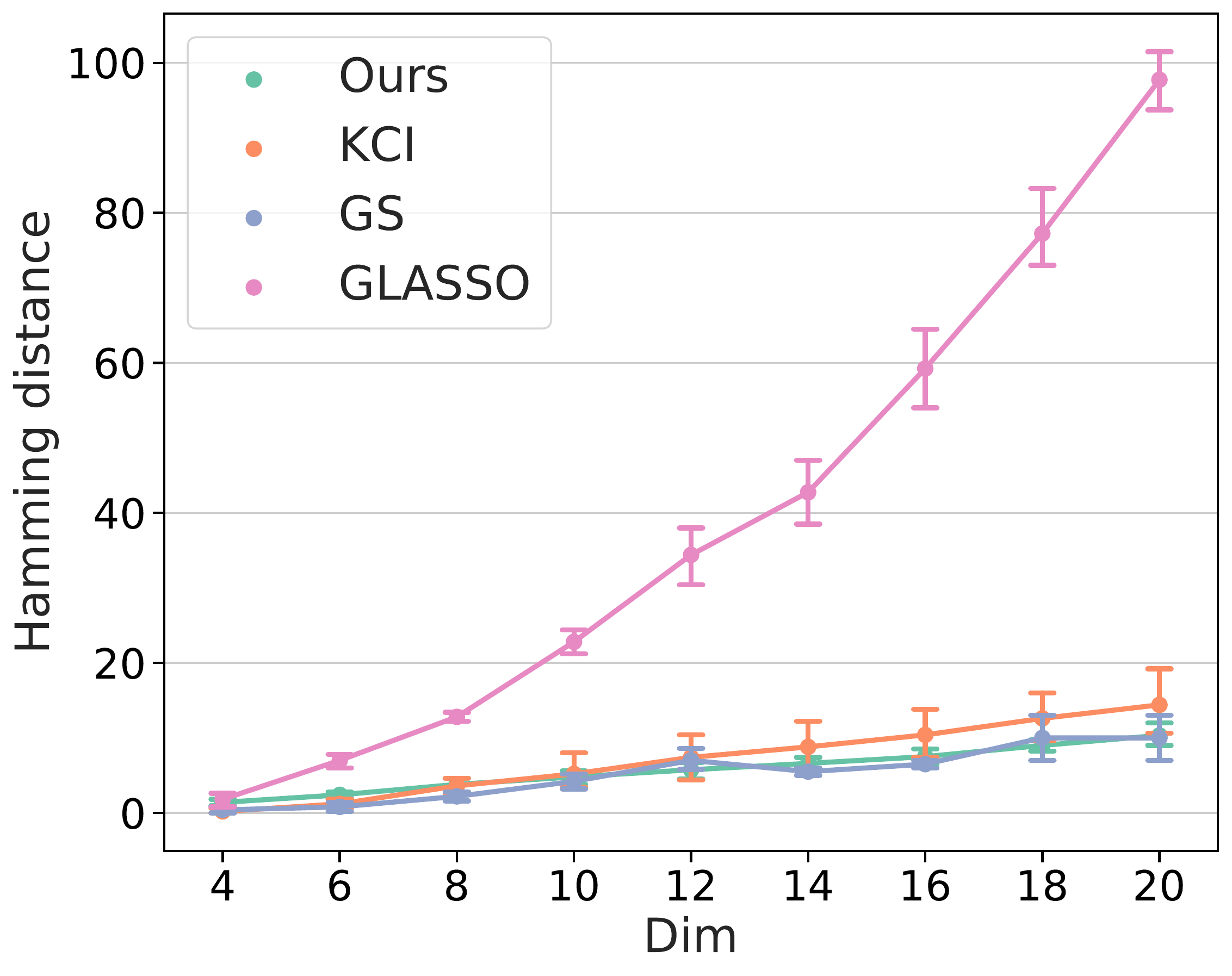}
  \caption{Discrete}
  \label{fig:2}
\end{subfigure}\hfil 
\begin{subfigure}{0.33\textwidth}
  \includegraphics[width=\linewidth]{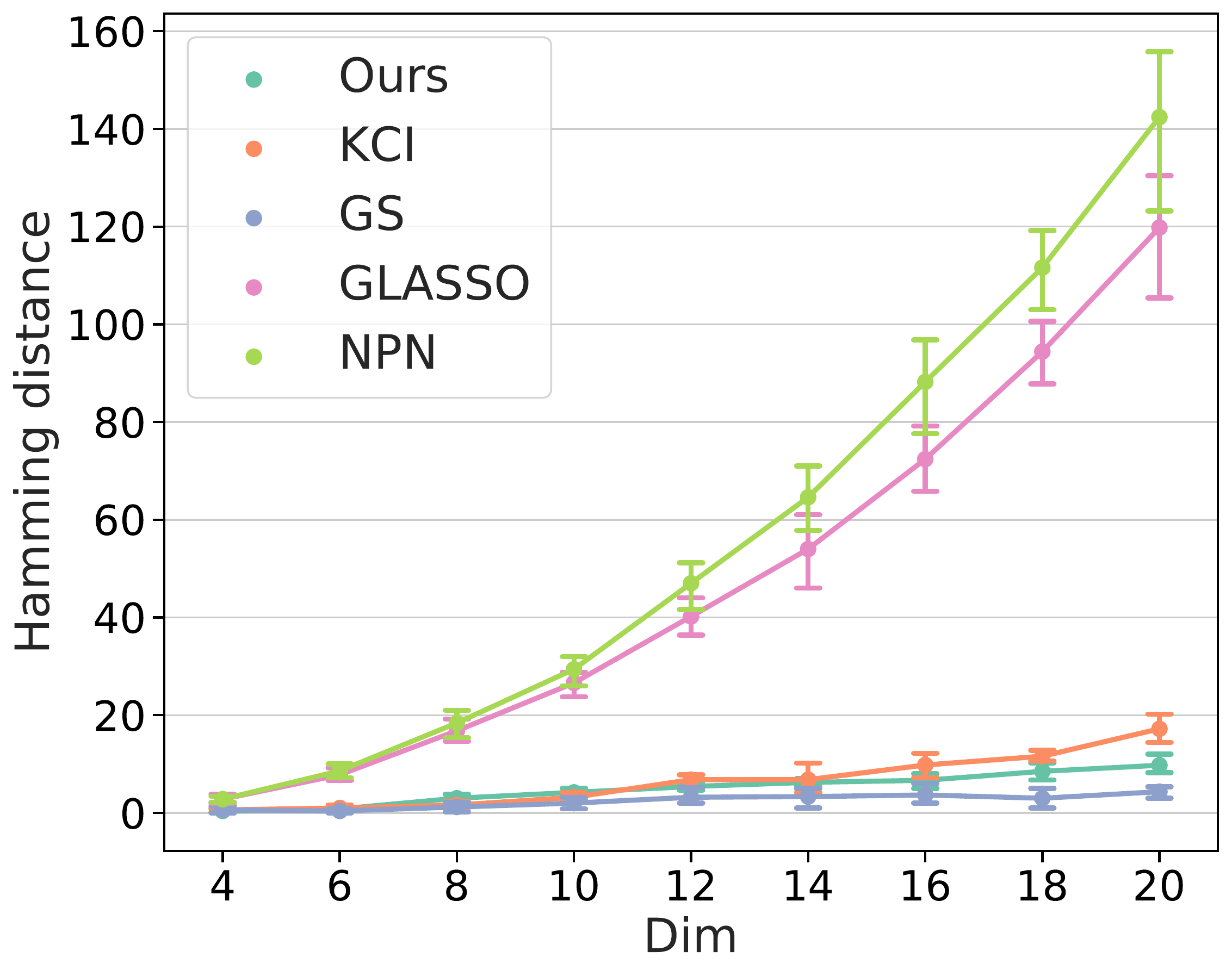}
  \caption{Mixed-type}
  \label{fig:3}
\end{subfigure}

\vspace{-0.4em}

\caption{Hamming distances for Butterfly distributions.}
\label{fig:comparison_1}
\end{figure}

\begin{figure}[h]
    \centering 
\begin{subfigure}{0.33\textwidth}
  \includegraphics[width=\linewidth]{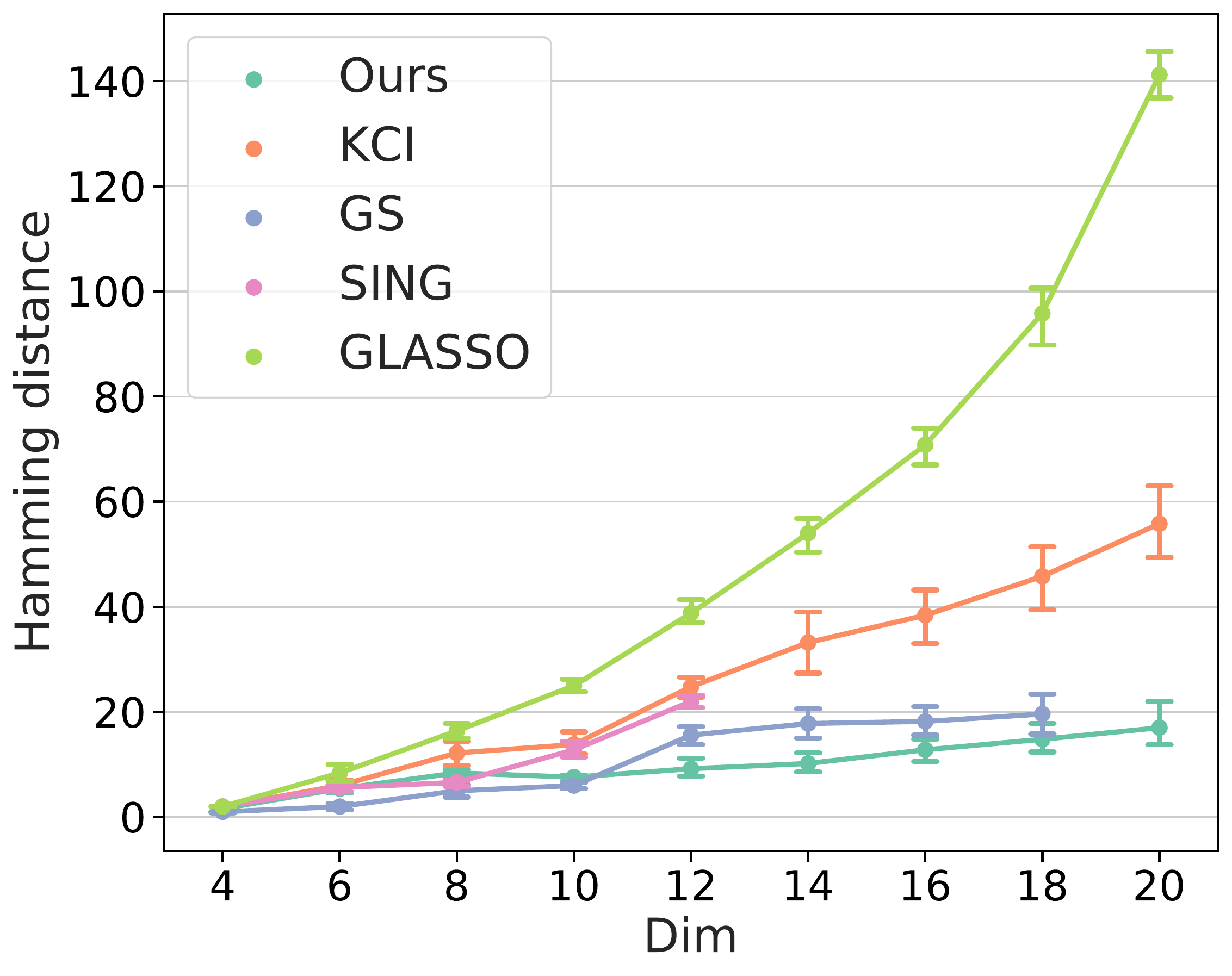} 
  \caption{Continuous}
  \label{fig:4}
\end{subfigure}\hfil 
\begin{subfigure}{0.33\textwidth}
  \includegraphics[width=\linewidth]{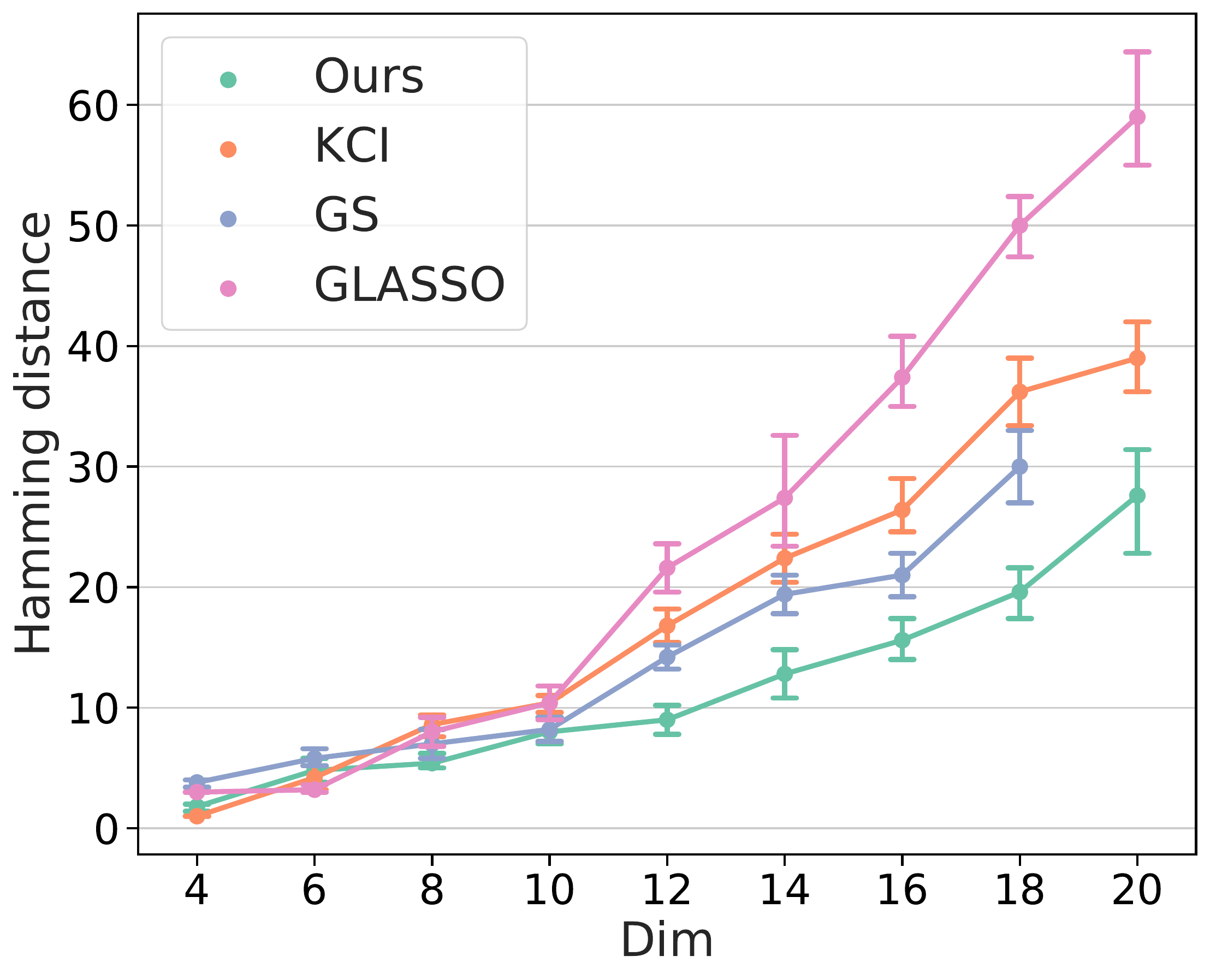}
  \caption{Discrete}
  \label{fig:5}
\end{subfigure}\hfil 
\begin{subfigure}{0.33\textwidth}
  \includegraphics[width=\linewidth]{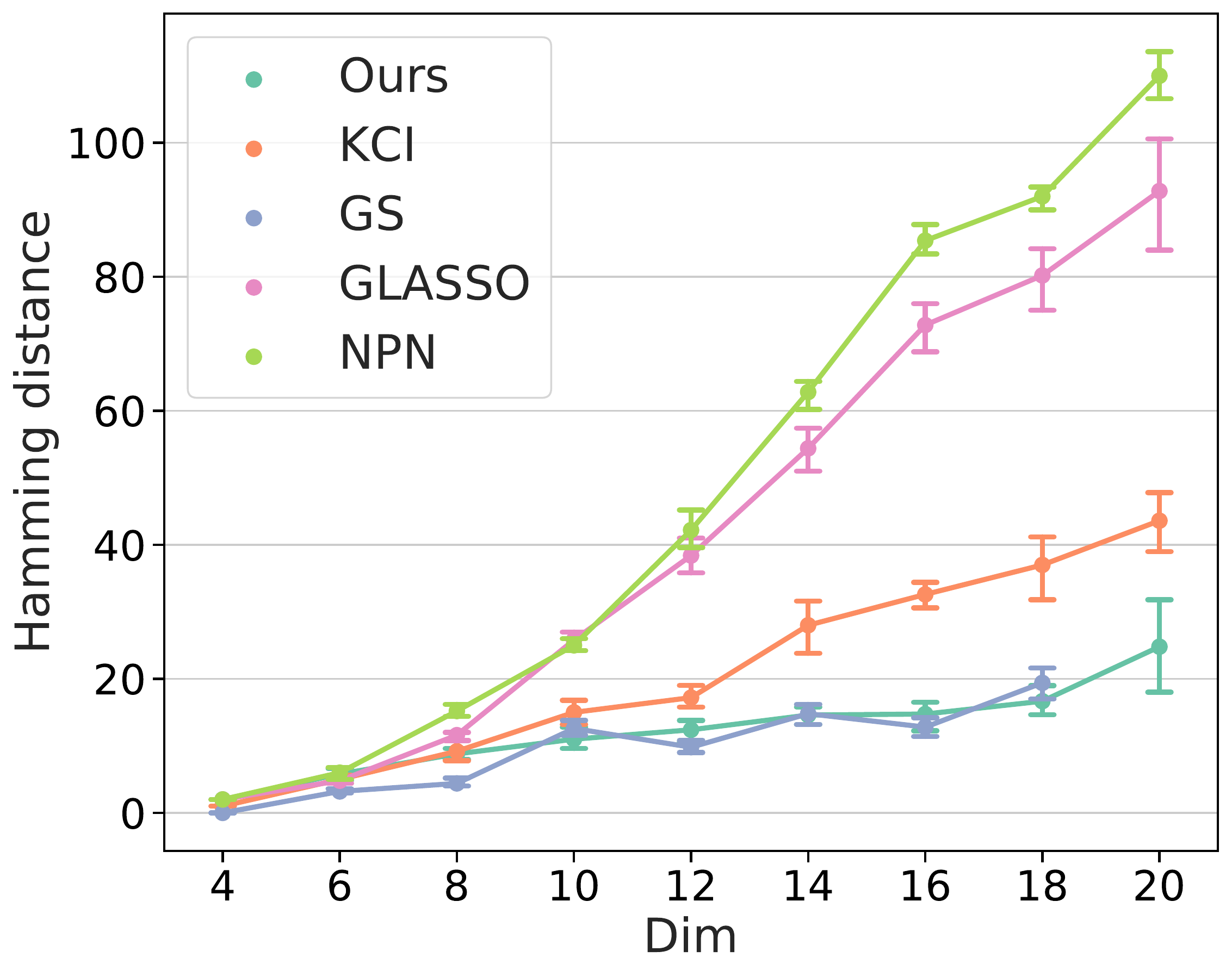}
  \caption{Mixed-type}
  \label{fig:6}
\end{subfigure}
\vspace{-0.4em}

\caption{Hamming distances for distributions from random graphs.}
\label{fig:comparison_2}
\vspace{-1.5em}
\end{figure}

\begin{wraptable}{r}{0.53\textwidth}
\vspace{-1em}
\caption{Running time for 12 variables.}
\vspace{-0.5em}
\label{tab:1}
\resizebox{0.53\textwidth}{!}{
\begin{tabular}{ccccccc}
\toprule
Method    & Ours & KCI & GS     & SING   & GLASSO & NPN \\ \midrule
Time (s) & 62.9 & 247 & 9536.5 & 4020.3 & 4.4  & 20.8  \\ \bottomrule
\end{tabular}}
\vspace{-0.7em}
\end{wraptable}

\looseness=-1
\textbf{Results.} \ \
We first conduct comparisons in general distributions for all data types (i.e., discrete, continuous, and mixed-type) with different numbers of variables and a sample size of 1000. Among the considered methods, both KCI and GS are available for the estimation of Markov network structures for general distributions with all data types. SING can only deal with continuous data and is therefore only applied in the continuous setting. We also include (semi)parametric methods (GLASSO and NPN) for baselines in the considered general settings. We use Hamming distance between the estimated graph and the ground truth graph as the metric. All results are from 5 trials with different random seeds. The missing results are either due to timeout (i.e., > 1 day) or OOM.

\looseness=-1
For the Butterfly distributions (Fig. \ref{fig:comparison_1}), one can observe that KCI, GS, and our method can almost recover the true structures with all data types. At the same time, in the more complex setting (i.e., distributions from random graphs, Fig. \ref{fig:comparison_2}), it is clear that our method outperforms others in most datasets. This suggests that, compared to baselines, our method may have more obvious advantages in more complicated scenarios. Meanwhile, the running times of KCI, GS, and SING are significantly longer than that of our method (Table. \ref{tab:1}). Besides, SING and GS cannot scale with more than 12 and 18 variables, respectively. GLASSO and NPN are remarkably fast but fail to accurately recover the structure in the general setting. NPN performs worse than GLASSO in structure recovery, which may be due to its misaligned hypothesis of the nonparanormal transformation in the general mixed-type setting.


\begin{wrapfigure}{r}{0.42\textwidth}
\vspace{-1.6em}
  \begin{center}
    \includegraphics[width=0.42\textwidth]{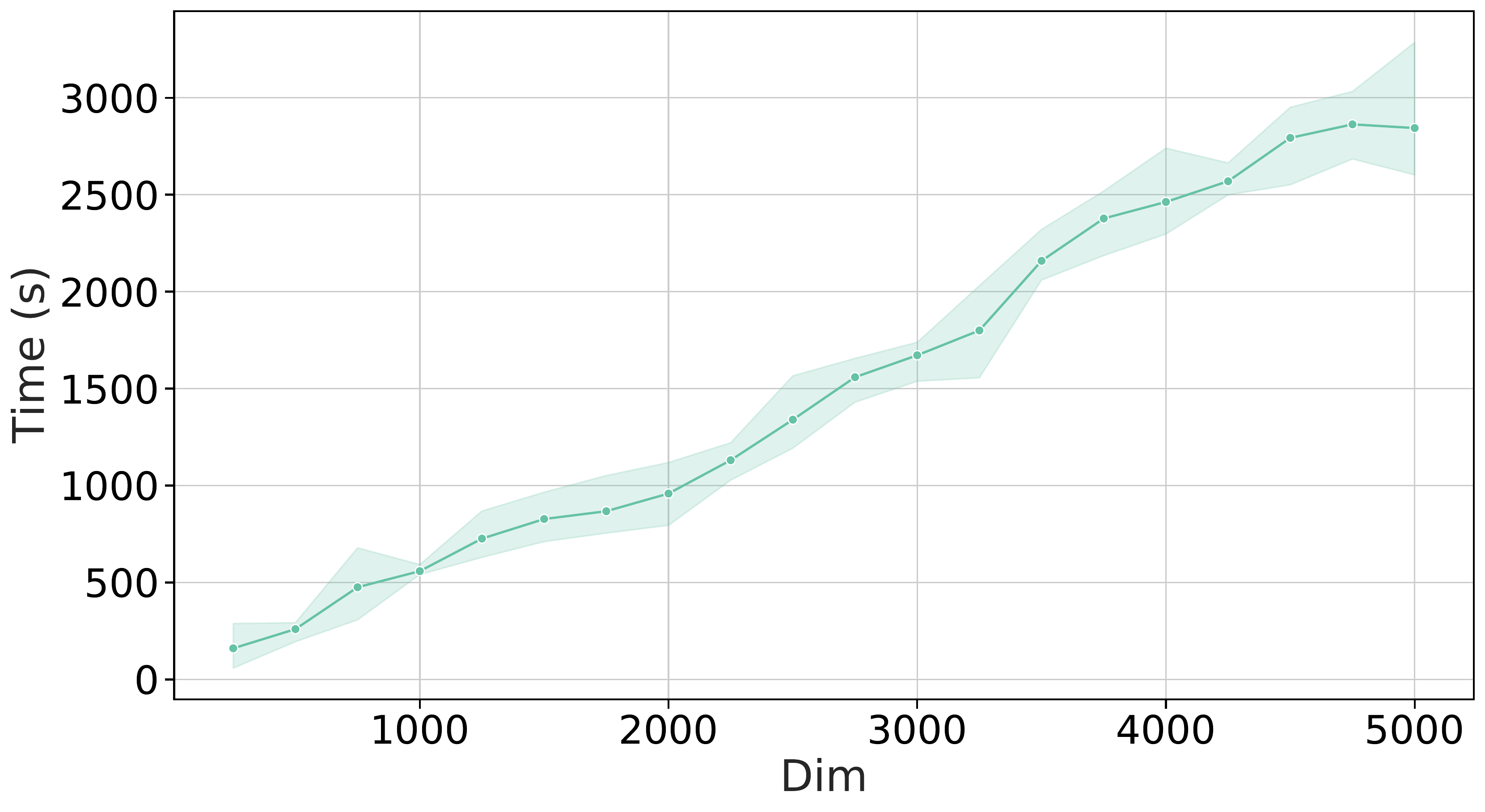}
  \end{center}
  \vspace{-1.2em}
  \caption{Running time for large graphs.}
  \label{fig:scale}
\vspace{-1.5em}
\end{wrapfigure}

\looseness = -1
We also conduct experiments on large graphs, with $\{250, 500, \ldots, 5000\}$ continuous variables from Butterfly distributions. Other settings are identical to those for smaller graphs. From Fig. \ref{fig:scale}, we observe that the running time is approximately linear w.r.t. the number of variables. Besides, all experiments are conducted on CPUs while our framework could be easily deployed on GPUs. This suggests the potential of taking advantage of recent advances in computation, especially for deep models, to even further improve the scalability.

\vspace{-0.6em}
\section{Conclusion}
\vspace{-0.6em}

\looseness=-1
We provide a scalable estimation framework based on regularized score matching for nonparametric Markov network structures. We first introduce necessary and sufficient conditions of conditional independence among variables in general distributions for all data types (i.e., continuous, discrete, and mixed-type) without specific assumptions on functional relations among variables, thus giving rise to the corresponding characterizations of the structure, i.e., Generalized Precision Matrix. Then, we unify all these cases under the same umbrella of the estimation framework based on regularized score matching. Appropriate penalties on the characterization matrix are introduced to promote constantly sparse entries for stable estimation. We validate our theoretical claims experimentally in various settings. Future work includes exploring the connection between Markov networks and causal graphs.


\section{Acknowledgment}
We thank the anonymous reviewers for their constructive comments. This project was partially supported by the National Institutes of Health (NIH) under Contract R01HL159805, by the NSF-Convergence Accelerator Track-D award \#2134901, by a grant from Apple Inc., a grant from KDDI Research Inc, and generous gifts from Salesforce Inc., Microsoft Research, and Amazon Research.

\bibliography{bibliography}

\begin{thebibliography}{54}
\providecommand{\natexlab}[1]{#1}
\providecommand{\url}[1]{\texttt{#1}}
\expandafter\ifx\csname urlstyle\endcsname\relax
  \providecommand{\doi}[1]{doi: #1}\else
  \providecommand{\doi}{doi: \begingroup \urlstyle{rm}\Url}\fi

\bibitem[Andrews et~al.(2018)Andrews, Ramsey, and Cooper]{andrews2018scoring}
Bryan Andrews, Joseph Ramsey, and Gregory~F Cooper.
\newblock Scoring bayesian networks of mixed variables.
\newblock \emph{International journal of data science and analytics},
  6\penalty0 (1):\penalty0 3--18, 2018.

\bibitem[Baptista et~al.(2021)Baptista, Marzouk, Morrison, and
  Zahm]{baptista2021learning}
Ricardo Baptista, Youssef Marzouk, Rebecca~E Morrison, and Olivier Zahm.
\newblock Learning non-gaussian graphical models via hessian scores and
  triangular transport.
\newblock \emph{arXiv preprint arXiv:2101.03093}, 2021.

\bibitem[Besag(1974)]{besag1974spatial}
Julian Besag.
\newblock Spatial interaction and the statistical analysis of lattice systems.
\newblock \emph{Journal of the Royal Statistical Society: Series B
  (Methodological)}, 36\penalty0 (2):\penalty0 192--225, 1974.

\bibitem[Breheny \& Huang(2011)Breheny and Huang]{Breheny2011coordinate}
Patrick Breheny and Jian Huang.
\newblock Coordinate descent algorithms for nonconvex penalized regression,
  with applications to biological feature selection.
\newblock \emph{The Annals of Applied Statistics}, 5\penalty0 (1):\penalty0
  232--253, 2011.

\bibitem[Brook(1964)]{brook1964distinction}
D~Brook.
\newblock On the distinction between the conditional probability and the joint
  probability approaches in the specification of nearest-neighbour systems.
\newblock \emph{Biometrika}, 51\penalty0 (3/4):\penalty0 481--483, 1964.

\bibitem[Carrington et~al.(2005)Carrington, Scott, and
  Wasserman]{carrington2005models}
Peter~J Carrington, John Scott, and Stanley Wasserman.
\newblock \emph{Models and methods in social network analysis}, volume~28.
\newblock Cambridge university press, 2005.

\bibitem[Cheng et~al.(2017)Cheng, Li, Levina, and Zhu]{cheng2017high}
Jie Cheng, Tianxi Li, Elizaveta Levina, and Ji~Zhu.
\newblock High-dimensional mixed graphical models.
\newblock \emph{Journal of Computational and Graphical Statistics}, 26\penalty0
  (2):\penalty0 367--378, 2017.

\bibitem[Chickering(2002)]{Chickering2002optimal}
David~M. Chickering.
\newblock Optimal structure identification with greedy search.
\newblock \emph{Journal of Machine Learning Research}, 3\penalty0
  (Nov):\penalty0 507--554, 2002.

\bibitem[Cimini et~al.(2019)Cimini, Squartini, Saracco, Garlaschelli,
  Gabrielli, and Caldarelli]{cimini2019statistical}
Giulio Cimini, Tiziano Squartini, Fabio Saracco, Diego Garlaschelli, Andrea
  Gabrielli, and Guido Caldarelli.
\newblock The statistical physics of real-world networks.
\newblock \emph{Nature Reviews Physics}, 1\penalty0 (1):\penalty0 58--71, 2019.

\bibitem[Dodani et~al.(2016)Dodani, Kiss, Cahn, Su, Pande, and
  Arnold]{dodani2016discovery}
Sheel~C Dodani, Gert Kiss, Jackson~KB Cahn, Ye~Su, Vijay~S Pande, and Frances~H
  Arnold.
\newblock Discovery of a regioselectivity switch in nitrating p450s guided by
  molecular dynamics simulations and markov models.
\newblock \emph{Nature chemistry}, 8\penalty0 (5):\penalty0 419--425, 2016.

\bibitem[Drton et~al.(2008)Drton, Sturmfels, and Sullivant]{drton2008lectures}
Mathias Drton, Bernd Sturmfels, and Seth Sullivant.
\newblock \emph{Lectures on algebraic statistics}, volume~39.
\newblock Springer Science \& Business Media, 2008.

\bibitem[Edwards(1990)]{edwards1990hierarchical}
David Edwards.
\newblock Hierarchical interaction models.
\newblock \emph{Journal of the Royal Statistical Society: Series B
  (Methodological)}, 52\penalty0 (1):\penalty0 3--20, 1990.

\bibitem[Fan \& Li(2001)Fan and Li]{fan2001variable}
Jianqing Fan and Runze Li.
\newblock Variable selection via nonconcave penalized likelihood and its oracle
  properties.
\newblock \emph{Journal of the American statistical Association}, 96\penalty0
  (456):\penalty0 1348--1360, 2001.

\bibitem[Fellinghauer et~al.(2013)Fellinghauer, B{\"u}hlmann, Ryffel,
  Von~Rhein, and Reinhardt]{fellinghauer2013stable}
Bernd Fellinghauer, Peter B{\"u}hlmann, Martin Ryffel, Michael Von~Rhein, and
  Jan~D Reinhardt.
\newblock Stable graphical model estimation with random forests for discrete,
  continuous, and mixed variables.
\newblock \emph{Computational Statistics \& Data Analysis}, 64:\penalty0
  132--152, 2013.

\bibitem[Friedman et~al.(2008)Friedman, Hastie, and
  Tibshirani]{friedman2008sparse}
Jerome Friedman, Trevor Hastie, and Robert Tibshirani.
\newblock Sparse inverse covariance estimation with the graphical lasso.
\newblock \emph{Biostatistics}, 9\penalty0 (3):\penalty0 432--441, 2008.

\bibitem[Grimmett(1973)]{grimmett1973theorem}
Geoffrey~R Grimmett.
\newblock A theorem about random fields.
\newblock \emph{Bulletin of the London Mathematical society}, 5\penalty0
  (1):\penalty0 81--84, 1973.

\bibitem[Harris \& Drton(2013)Harris and Drton]{harris2013pc}
Naftali Harris and Mathias Drton.
\newblock Pc algorithm for nonparanormal graphical models.
\newblock \emph{Journal of Machine Learning Research}, 14\penalty0 (11), 2013.

\bibitem[Huang et~al.(2018)Huang, Zhang, Lin, Sch{\"o}lkopf, and
  Glymour]{huang2018generalized}
Biwei Huang, Kun Zhang, Yizhu Lin, Bernhard Sch{\"o}lkopf, and Clark Glymour.
\newblock Generalized score functions for causal discovery.
\newblock In \emph{Proceedings of the 24th ACM SIGKDD international conference
  on knowledge discovery \& data mining}, pp.\  1551--1560, 2018.

\bibitem[Huang \& Zhang(2010)Huang and Zhang]{huang2010benefit}
Junzhou Huang and Tong Zhang.
\newblock The benefit of group sparsity.
\newblock \emph{The Annals of Statistics}, 38\penalty0 (4):\penalty0
  1978--2004, 2010.

\bibitem[Hyv{\"a}rinen \& Dayan(2005)Hyv{\"a}rinen and
  Dayan]{hyvarinen2005estimation}
Aapo Hyv{\"a}rinen and Peter Dayan.
\newblock Estimation of non-normalized statistical models by score matching.
\newblock \emph{Journal of Machine Learning Research}, 6\penalty0 (4), 2005.

\bibitem[Jaimovich et~al.(2006)Jaimovich, Elidan, Margalit, and
  Friedman]{jaimovich2006towards}
Ariel Jaimovich, Gal Elidan, Hanah Margalit, and Nir Friedman.
\newblock Towards an integrated protein--protein interaction network: A
  relational markov network approach.
\newblock \emph{Journal of Computational Biology}, 13\penalty0 (2):\penalty0
  145--164, 2006.

\bibitem[Lauritzen(1996)]{lauritzen1996graphical}
Steffen~L Lauritzen.
\newblock \emph{Graphical models}, volume~17.
\newblock Clarendon Press, 1996.

\bibitem[Lauritzen et~al.(1989)Lauritzen, Andersen, Edwards, J{\"o}reskog, and
  Johansen]{lauritzen1989mixed}
Steffen~L Lauritzen, Anders~Holst Andersen, David Edwards, Karl~G J{\"o}reskog,
  and S{\o}ren Johansen.
\newblock Mixed graphical association models [with discussion and reply].
\newblock \emph{Scandinavian Journal of Statistics}, pp.\  273--306, 1989.

\bibitem[Lee \& Hastie(2015)Lee and Hastie]{lee2015learning}
Jason~D Lee and Trevor~J Hastie.
\newblock Learning the structure of mixed graphical models.
\newblock \emph{Journal of Computational and Graphical Statistics}, 24\penalty0
  (1):\penalty0 230--253, 2015.

\bibitem[Lin et~al.(2016)Lin, Drton, and Shojaie]{lin2016estimation}
Lina Lin, Mathias Drton, and Ali Shojaie.
\newblock Estimation of high-dimensional graphical models using regularized
  score matching.
\newblock \emph{Electronic journal of statistics}, 10\penalty0 (1):\penalty0
  806, 2016.

\bibitem[Liu et~al.(2009)Liu, Lafferty, and Wasserman]{liu2009nonparanormal}
Han Liu, John Lafferty, and Larry Wasserman.
\newblock The nonparanormal: Semiparametric estimation of high dimensional
  undirected graphs.
\newblock \emph{Journal of Machine Learning Research}, 10\penalty0 (10), 2009.

\bibitem[Liu et~al.(2012)Liu, Han, and Zhang]{liu2012transelliptical}
Han Liu, Fang Han, and Cun-hui Zhang.
\newblock Transelliptical graphical models.
\newblock \emph{Advances in neural information processing systems}, 25, 2012.

\bibitem[Loh \& B{{\"u}}hlmann(2014)Loh and B{{\"u}}hlmann]{Loh2014high}
Po-Ling Loh and Peter B{{\"u}}hlmann.
\newblock High-dimensional learning of linear causal networks via inverse
  covariance estimation.
\newblock \emph{Journal of Machine Learning Research}, 15\penalty0
  (88):\penalty0 3065--3105, 2014.

\bibitem[Loh \& Wainwright(2012)Loh and Wainwright]{loh2012structure}
Po-Ling Loh and Martin~J Wainwright.
\newblock Structure estimation for discrete graphical models: Generalized
  covariance matrices and their inverses.
\newblock \emph{Advances in Neural Information Processing Systems}, 25, 2012.

\bibitem[Loh \& Wainwright(2013)Loh and Wainwright]{Loh2013structure}
Po-Ling Loh and Martin~J. Wainwright.
\newblock Structure estimation for discrete graphical models: Generalized
  covariance matrices and their inverses.
\newblock \emph{The Annals of Statistics}, 41\penalty0 (6):\penalty0
  3022--3049, 2013.

\bibitem[Loh \& Wainwright(2017)Loh and Wainwright]{Loh2017support}
Po-Ling Loh and Martin~J. Wainwright.
\newblock Support recovery without incoherence: A case for nonconvex
  regularization.
\newblock \emph{The Annals of Statistics}, 45\penalty0 (6):\penalty0
  2455--2482, 2017.

\bibitem[Lyu(2012)]{lyu2012interpretation}
Siwei Lyu.
\newblock Interpretation and generalization of score matching.
\newblock \emph{arXiv preprint arXiv:1205.2629}, 2012.

\bibitem[Meinshausen \& B{\"u}hlmann(2006)Meinshausen and
  B{\"u}hlmann]{meinshausen2006high}
Nicolai Meinshausen and Peter B{\"u}hlmann.
\newblock High-dimensional graphs and variable selection with the lasso.
\newblock \emph{The annals of statistics}, 34\penalty0 (3):\penalty0
  1436--1462, 2006.

\bibitem[Morrison et~al.(2017)Morrison, Baptista, and
  Marzouk]{morrison2017beyond}
Rebecca Morrison, Ricardo Baptista, and Youssef Marzouk.
\newblock Beyond normality: Learning sparse probabilistic graphical models in
  the non-gaussian setting.
\newblock \emph{Advances in neural information processing systems}, 30, 2017.

\bibitem[Ng et~al.(2021)Ng, Zheng, Zhang, and Zhang]{Ng2021reliable}
Ignavier Ng, Yujia Zheng, Jiji Zhang, and Kun Zhang.
\newblock Reliable causal discovery with improved exact search and weaker
  assumptions.
\newblock In \emph{Advances in Neural Information Processing Systems}, 2021.

\bibitem[Pham \& Garat(1997)Pham and Garat]{pham1997blind}
Dinh~Tuan Pham and Philippe Garat.
\newblock Blind separation of mixture of independent sources through a
  quasi-maximum likelihood approach.
\newblock \emph{IEEE transactions on Signal Processing}, 45\penalty0
  (7):\penalty0 1712--1725, 1997.

\bibitem[Raskutti et~al.(2008)Raskutti, Yu, Wainwright, and
  Ravikumar]{raskutti2008model}
Garvesh Raskutti, Bin Yu, Martin~J Wainwright, and Pradeep Ravikumar.
\newblock Model selection in gaussian graphical models: High-dimensional
  consistency of $\ell_1$-regularized mle.
\newblock \emph{Advances in Neural Information Processing Systems}, 21, 2008.

\bibitem[Ravikumar et~al.(2008)Ravikumar, Raskutti, Wainwright, and
  Yu]{Ravikumar2008model}
Pradeep Ravikumar, Garvesh Raskutti, Martin~J Wainwright, and Bin Yu.
\newblock Model selection in {Gaussian} graphical models: High-dimensional
  consistency of $ell_1$-regularized {MLE}.
\newblock In \emph{Advances in Neural Information Processing Systems}, 2008.

\bibitem[Ravikumar et~al.(2010)Ravikumar, Wainwright, and
  Lafferty]{ravikumar2010high}
Pradeep Ravikumar, Martin~J Wainwright, and John~D Lafferty.
\newblock High-dimensional ising model selection using $\ell_1$-regularized
  logistic regression.
\newblock \emph{The Annals of Statistics}, 38\penalty0 (3):\penalty0
  1287--1319, 2010.

\bibitem[Ravikumar et~al.(2011)Ravikumar, Wainwright, Raskutti, and
  Yu]{ravikumar2011high}
Pradeep Ravikumar, Martin~J Wainwright, Garvesh Raskutti, and Bin Yu.
\newblock High-dimensional covariance estimation by minimizing
  $\ell_1$-penalized log-determinant divergence.
\newblock \emph{Electronic Journal of Statistics}, 5:\penalty0 935--980, 2011.

\bibitem[Shen et~al.(2009)Shen, Cheng, Cai, and Hu]{shen2009detect}
Huawei Shen, Xueqi Cheng, Kai Cai, and Mao-Bin Hu.
\newblock Detect overlapping and hierarchical community structure in networks.
\newblock \emph{Physica A: Statistical Mechanics and its Applications},
  388\penalty0 (8):\penalty0 1706--1712, 2009.

\bibitem[Song et~al.(2020)Song, Garg, Shi, and Ermon]{song2020sliced}
Yang Song, Sahaj Garg, Jiaxin Shi, and Stefano Ermon.
\newblock Sliced score matching: A scalable approach to density and score
  estimation.
\newblock In \emph{Uncertainty in Artificial Intelligence}, pp.\  574--584.
  PMLR, 2020.

\bibitem[Spantini et~al.(2018)Spantini, Bigoni, and
  Marzouk]{spantini2018inference}
Alessio Spantini, Daniele Bigoni, and Youssef Marzouk.
\newblock Inference via low-dimensional couplings.
\newblock \emph{The Journal of Machine Learning Research}, 19\penalty0
  (1):\penalty0 2639--2709, 2018.

\bibitem[Spirtes \& Glymour(1991)Spirtes and Glymour]{Spirtes1991pc}
Peter Spirtes and Clark Glymour.
\newblock An algorithm for fast recovery of sparse causal graphs.
\newblock \emph{Social Science Computer Review}, 9:\penalty0 62--72, 1991.

\bibitem[Suggala et~al.(2017)Suggala, Kolar, and
  Ravikumar]{suggala2017expxorcist}
Arun Suggala, Mladen Kolar, and Pradeep~K Ravikumar.
\newblock The expxorcist: Nonparametric graphical models via conditional
  exponential densities.
\newblock \emph{Advances in neural information processing systems}, 30, 2017.

\bibitem[Tibshirani(1996)]{Robert1996lasso}
Robert Tibshirani.
\newblock Regression shrinkage and selection via the {Lasso}.
\newblock \emph{Journal of the Royal Statistical Society. Series B
  (Methodological)}, 58\penalty0 (1):\penalty0 267--288, 1996.

\bibitem[Tsamardinos et~al.(2003)Tsamardinos, Aliferis, Statnikov, and
  Statnikov]{tsamardinos2003algorithms}
Ioannis Tsamardinos, Constantin~F Aliferis, Alexander~R Statnikov, and
  Er~Statnikov.
\newblock Algorithms for large scale markov blanket discovery.
\newblock In \emph{FLAIRS conference}, volume~2, pp.\  376--380. St. Augustine,
  FL, 2003.

\bibitem[Wainwright(2009)]{Wainwright2009sharp}
Martin~J. Wainwright.
\newblock Sharp thresholds for high-dimensional and noisy sparsity recovery
  using $ell_1$-constrained quadratic programming ({Lasso}).
\newblock \emph{IEEE Transactions on Information Theory}, 55\penalty0
  (5):\penalty0 2183--2202, 2009.

\bibitem[Yang et~al.(2015)Yang, Ravikumar, Allen, and Liu]{yang2015graphical}
Eunho Yang, Pradeep Ravikumar, Genevera~I Allen, and Zhandong Liu.
\newblock Graphical models via univariate exponential family distributions.
\newblock \emph{The Journal of Machine Learning Research}, 16\penalty0
  (1):\penalty0 3813--3847, 2015.

\bibitem[Yuan(2010)]{yuan2010high}
Ming Yuan.
\newblock High dimensional inverse covariance matrix estimation via linear
  programming.
\newblock \emph{The Journal of Machine Learning Research}, 11:\penalty0
  2261--2286, 2010.

\bibitem[Yuan \& Lin(2006)Yuan and Lin]{yuan2006model}
Ming Yuan and Yi~Lin.
\newblock Model selection and estimation in regression with grouped variables.
\newblock \emph{Journal of the Royal Statistical Society: Series B (Statistical
  Methodology)}, 68\penalty0 (1):\penalty0 49--67, 2006.

\bibitem[Zhang(2010)]{Zhang2010nearly}
Cun-Hui Zhang.
\newblock Nearly unbiased variable selection under minimax concave penalty.
\newblock \emph{The Annals of Statistics}, 38\penalty0 (2):\penalty0 894--942,
  2010.

\bibitem[Zhang et~al.(2012)Zhang, Peters, Janzing, and
  Sch{\"o}lkopf]{zhang2012kernel}
Kun Zhang, Jonas Peters, Dominik Janzing, and Bernhard Sch{\"o}lkopf.
\newblock Kernel-based conditional independence test and application in causal
  discovery.
\newblock \emph{arXiv preprint arXiv:1202.3775}, 2012.

\bibitem[Zou(2006)]{zou2006adaptive}
Hui Zou.
\newblock The adaptive lasso and its oracle properties.
\newblock \emph{Journal of the American statistical association}, 101\penalty0
  (476):\penalty0 1418--1429, 2006.

\end{thebibliography}
\bibliographystyle{iclr2023_conference}

\newpage
\appendix


\addcontentsline{toc}{section}{Appendix} 
\part{Appendix} 

{\hypersetup{linkcolor=black}
\parttoc 
}

\section{Proofs}\label{sec:proofs}

\subsection{Proof of Corollary \ref{coro:measure_continuous}}\label{sec:proof_measure_continuous}

\MeasureContinuous*

\begin{proof}
According to Eq. \eqref{eq:omega_c}, it is clear that when $\Omega^{[c]}_{i,j} = 0$, we have $\frac{\partial^2 \log p_{\mathbf{X}}}{\partial{x_i} \partial{x_j}} = 0$, which is a necessary and sufficient condition of $x_{i} \ind x_{j} \mid x_{\mathbf{V} \backslash\{i, j\}}$ for strictly positive, smooth, and continuous distributions as shown in Sec. \ref{sec:cc}. 

It is worth noting that it also applies in the Gaussian case, where $\mathbf{X} \sim \mathcal{N}(\mu, \Sigma)$ is a Gaussian vector with mean $\mu$ and non-singular covariance $\Sigma$. In this case, we have
\begin{equation}
    P_\mathbf{X}(\mathbf{x}) \propto \operatorname{exp}\left(\frac{-(\mathbf{x}-\mu)^{\top}\Sigma^{-1}(\mathbf{x}-\mu)}{2}\right),
\end{equation}
which implies
\begin{equation}
    \frac{\partial^2 \log p_{\mathbf{X}}}{\partial{x_i} \partial{x_j}} = (\Sigma^{-1})^2_{i,j},
\end{equation}
where $(\Sigma^{-1})_{i,j}$ denotes the corresponding entry of the inverse covariance matrix. This well-known property of Gaussian distribution was also shown in \citet{baptista2021learning, drton2008lectures}. Because the inverse covariance matrix encodes the conditional independence structure when variables are from Gaussian distributions, $\Omega^{[c]}$ characterizes the Markov property for the Gaussian case.
\end{proof}

\subsection{Proof of Theorem \ref{thm:discrete_measure}}\label{sec:proof_discrete_measure}

\CondIndCrossDerivDiscrete*

\begin{proof}
\textbf{Sufficient condition.} \ \
Without loss of generality, let us consider three discrete variables, i.e., $\{X_i, X_j, Z\}$. Let $\{x_{i1},\dots,x_{iM_i}\}$ and $\{x_{j1},\dots,x_{jM_j}\}$ be the support of variables $X_i$ and $X_j$, respectively.

Consider the case that the finite difference of the discrete score function of $X_i$ w.r.t. $X_j$ equals zero.

When $X_i=x_{i1}$, $X_j=x_{j1}$, and differences are considered w.r.t. to $x_{ik'}$ and $x_{jl'}$, we have
\begin{equation} \label{eq:zero_eq}
\begin{aligned}
    &\left(\log m(x_{i1},x_{j1},z) - \log m(x_{ik'},x_{j1},z)\right)\\ 
    &- \left(\log m(x_{i1},x_{jl'},z) - \log m(x_{ik'},x_{jl'},z)\right) = 0,
\end{aligned}
\end{equation}
where $m(x_{i1},x_{j1},z)$ is the joint PMF simplified from $m_{X_i,X_j,\mathbf{Z}}\{x_{i1},x_{j1},z\}$. By iterating all possible differences w.r.t. $X_j$, for all $ l $ in $\{2, \ldots, M_j\}$, we have
\begin{equation} \label{eq:zero_eq_y_jl}
\begin{aligned}
    &\left(\log m(x_{i1},x_{j1},z) - \log m(x_{ik'},x_{j1},z)\right)\\ 
    &- \left(\log m(x_{i1},x_{jl},z) - \log m(x_{ik'},x_{jl},z)\right) = 0.
\end{aligned}
\end{equation}
Define the discrete score function of $X_i$ as $g(x_{i1}, x_{ik'}, \gamma) = \log m(x_{i1},\gamma) - \log m(x_{ik'},\gamma)$, where $\gamma$ denotes other variables. Eq. \ref{eq:zero_eq_y_jl} means $g(x_{i1}, x_{ik'}, \gamma)$ doe not take the value of $X_j$ as an argument when the LHS of Eq. \ref{eq:zero_eq_y_jl} equals zero. As a result, Eq. \ref{eq:zero_eq_y_jl} could be formulated as
\begin{equation} \label{eq:zero_eq_y_jl_g}
\begin{aligned}
    \log m(x_{i1},x_{j1},z) &=  \log m(x_{ik'},x_{j1},z) + g(x_{i1}, x_{ik'}, \gamma).
\end{aligned}
\end{equation}

Then by iterating all possible differences w.r.t. $X_i$, for all $ k $ in $I_{x_i} = \{2, \ldots, M_i\}$, we have
\begin{equation} \label{eq:zero_eq_x_il_g}
\begin{aligned}
    \log m(x_{i1},x_{j1},z) &=  \log m(x_{ik},x_{j1},z) + g(x_{i1}, x_{ik}, \gamma).
\end{aligned}
\end{equation}

By summation, we have
\begin{equation} 
\begin{aligned}
    (N-1) \log m(x_{i1},x_{j1},z) &=  \sum_{k \in I_{x_i}} \left(\log m(x_{ik},x_{j1},z) + g(x_{i1}, x_{ik}, \gamma)\right)\\
    &= \sum_{k \in I_{x_i}} \log m(x_{ik},x_{j1},z) + \sum_{k \in I_{x_i}} g(x_{i1}, x_{ik}, \gamma),
\end{aligned}
\end{equation}
which implies that 
\begin{equation}\label{eq:zero_eq_y_jl_g_sum}
    \begin{aligned}
        {M_i} \log m(x_{i1},x_{j1},z) &=  \sum_{k=1}^{M_i} \log m(x_{ik},x_{j1},z) + \sum_{k \in I_{x_i}} g(x_{i1}, x_{ik}, \gamma), \\
        \log m(x_{i1},x_{j1},z) &=  \frac{1}{{M_i}} \left(\sum_{k=1}^{M_i} \log m(x_{ik},x_{j1},z) + \sum_{k \in I_{x_i}} g(x_{i1}, x_{ik}, \gamma)\right).
    \end{aligned}
\end{equation}
Because $\sum_{k=1}^{M_i} \log m(x_{ik},x_{j1},z)$ covers all possible values of $X_i$, this term does not depend on the specific value of $X_i$. Besides, the other term $\sum_{k \in I_{x_i}} g(x_{i1}, x_{ik}, \gamma)$ does not depend on $X_j$. It is worth noting that $X_{i1}$ could be any value of $X_{i}$ w.o.l.g.. Therefore, we could see that when the finite difference of the discrete score function of $X_i$ w.r.t. to $X_j$ equal to zero (after some aggregation of samples), $X_i \indep X_j \mid \mathbf{Z}$.

\textbf{Necessary condition.} \ \ When $X_i \indep X_j \mid \mathbf{Z}$, we could decompose $m(x_i, x_j, z)$ as $m_{X_i \mid \mathbf{Z}}(x_i \mid z)m_{X_j \mid \mathbf{Z}}(x_j \mid z)m_{\mathbf{Z}}(z)$. This implies that, for all $ k$ in $\{2, \ldots, M_i\}$ and $ l $ in $\{2, \ldots, M_j\}$, we have
\begin{equation}
    \begin{aligned}
        & \quad  \left(\log m(x_{i1},x_{j1},z) - \log m(x_{ik},x_{j1},z)\right)\\
    & \quad  - \left(\log m(x_{i1},x_{jl},z) - \log m(x_{ik},y_{jl},z)\right)\\
    &= \left( \log \left(m_{X_i \mid \mathbf{Z}}(x_{i1} \mid z)m_{X_j \mid \mathbf{Z}}(x_{j1} \mid z)m_{\mathbf{Z}}(z)\right) \right.\\
    & \quad - \left. \log \left(m_{X_i \mid \mathbf{Z}}(x_{ik} \mid z)m_{X_j \mid \mathbf{Z}}(x_{j1} \mid z)m_{\mathbf{Z}}(z)\right) \right)\\
    & \quad  - \left( \log \left(m_{X_i \mid \mathbf{Z}}(x_{i1} \mid z)m_{X_j \mid \mathbf{Z}}(x_{jl} \mid z)m_{\mathbf{Z}}(z)\right)\right.\\
    & \quad \quad \quad \left. - \log \left(m_{X_i \mid \mathbf{Z}}(x_{ik} \mid z) m_{X_j \mid \mathbf{Z}}(x_{jl} \mid z)m_{\mathbf{Z}}(z)\right) \right)\\
    &= \left(\left( \log m_{X_i \mid \mathbf{Z}}(x_{i1} \mid z) + \log m_{X_j \mid \mathbf{Z}}(x_{j1} \mid z) + \log m_{\mathbf{Z}}(z)\right)\right. \\
    & \quad - \left.\left(\log m_{X_i \mid \mathbf{Z}}(x_{ik} \mid z) + \log m_{X_j \mid \mathbf{Z}}(x_{j1} \mid z) + \log m_{\mathbf{Z}}(z) \right)\right)\\
    & \quad  - \left( \left( \log m_{X_i \mid \mathbf{Z}}(x_{i1} \mid z) + \log m_{X_j \mid \mathbf{Z}}(x_{jl} \mid z) + \log m_{\mathbf{Z}}(z)\right) \right. \\
    &\quad \quad \quad - \left.  \left( \log m_{X_i \mid \mathbf{Z}}(x_{ik} \mid z) + \log m_{X_j \mid \mathbf{Z}}(x_{jl} \mid z) + \log m_{\mathbf{Z}}(z)\right) \right)\\
    &= 0.
    \end{aligned}
\end{equation}
Therefore, when $X_i \indep X_j \mid \mathbf{Z}$, the finite difference of the discrete score function of $X_i$ w.r.t. to $X_j$ equals zero.

The proof is complete.
\end{proof}

\subsection{Proof of Corollary \ref{coro:measure_discrete}}\label{sec:proof_measure_discrete}

\MeasureDiscrete*

\begin{proof}
According to Eq. \eqref{eq:omega_d}, we have
\begin{equation}
    \Omega^{[d]}_{i,j} \coloneqq \mathbb{E}_{m_\mathbf{X}}\left[\sum_{k,l} {f^{[d]}}(x_{i1}, x_{ik}, x_{j1}, x_{jl}, z)^2\right],
\end{equation}
where ${f^{[d]}}(x_{i1}, x_{ik}, x_{j1}, x_{jl}, z)$ denotes the LHS of Eq. \eqref{eq:omega_d_ori}, i.e.,
\begin{equation}
\begin{aligned}
    &{f^{[d]}}(x_{i1}, x_{ik}, x_{j1}, x_{jl}, z)\\
    = &\left(\log m(x_{i1},x_{j1},z) - \log m(x_{ik},x_{j1},z)\right)\\
    &- \left(\log m(x_{i1},x_{jl},z) - \log m(x_{ik},x_{jl},z)\right).
\end{aligned}
\end{equation}
Thus, if $\Omega^{[d]}_{i,j} = 0$, we must have 
\begin{equation}
\begin{aligned}
        &\left(\log m(x_{i1},x_{j1},z)-\log m(x_{ik},x_{j1},z)\right)\\
        &- \left(\log m(x_{i1},x_{jl},z) - \log m(x_{ik},x_{jl},z)\right) = 0,
\end{aligned}
\end{equation}
for all $k\in[{M_i}]$ and $l\in[{M_j}]$, where ${M_i}$ and ${M_j}$ denote the cardinalities of $X_i$ and $X_j$, respectively. Based on Theorem. \ref{thm:discrete_measure}, we have $X_i \indep X_j \mid \mathbf{Z}$.

The proof is complete.
\end{proof}

\subsection{Proof of Theorem \ref{thm:mixed_measure}}\label{sec:proof_mixed_measure}

\CondIndCrossDerivMixed*

\begin{proof}

\textbf{Sufficient condition.} \ \ Without loss of generality, let us consider three variables, i.e., $\{X_i, X_j, Z\}$:
\begin{equation}
    \begin{aligned}
        x_i &\in \{x_{i1},\dots,x_{i{M_i}}\}\\
        x_j &\in \mathbb{R},
    \end{aligned}
\end{equation}
where we set $X_i = x_i$ as the discrete variable and $X_j = x_j$ as the continuous variable w.l.o.g. Note that we do not constraint the type of $Z = z$ here but set $Z$ as continuous for brevity. 

Consider the case that the finite difference of the score function of $X_j$ w.r.t. $X_i$ equals zero. Also, we define $p$ as the p.d.f. and $m$ as the p.m.f.. We first consider the difference between $x_{i1}$ and $x_{ik'}$. 
\begin{equation}
\begin{aligned}
        \frac{\partial \log \left(p_{X_j,Z \mid X_i}(x_j,z \mid x_{i1}) m_{X_i}(x_{i1})\right)}{\partial x_j} - 
\frac{\partial \log \left(p_{X_j,Z \mid X_i}(x_j,z \mid x_{ik'}) m_{X_i}(x_{ik'})\right)}{\partial x_j} = 0.
\end{aligned}
\end{equation}
By iterating all possible differences w.r.t. $x_i$, for all $k$ in $I_{x_i} = \{2, \ldots, M\}$, we have
\begin{equation}
\begin{aligned}
\frac{\partial \log \left(p_{X_j,Z \mid X_i}(x_j,z \mid x_{i1}) m_{X_i}(x_{i1})\right)}{\partial x_j} - \frac{\partial \log \left(p_{X_j,Z \mid X_i}(x_j,z \mid x_{ik}) m_{X_i}(x_{ik})\right)}{\partial x_j} &= 0,
\end{aligned}
\end{equation}
which is equivalent to
\begin{equation}
\frac{\partial \log \left(p_{X_j,Z \mid X_i}(x_j,z \mid x_{i1}) m_{X_i}(x_{i1})\right)}{\partial x_j} = \frac{\partial \log \left(p_{X_j,Z \mid X_i}(x_j,z \mid x_{ik}) m_{X_i}(x_{ik})\right)}{\partial x_j}.
\end{equation}
Then by integrating on both sides w.r.t. $X_j$, we have
\begin{equation}
    \log \left(p_{X_j,Z \mid X_i}(x_j,z \mid x_{i1}) m_{X_i}(x_{i1})\right) =\log \left(p_{X_j,Z \mid X_i}(x_j,z \mid x_{ik}) m_{X_i}(x_{ik})\right) + C_{k},
\end{equation}
where $C_{k}$ is a constant. We then apply a summation as follows
\begin{equation}
    \begin{aligned}
         & ({M_i}-1) \log \left(p_{X_j,Z \mid X_i}(x_j,z \mid x_{i1}) m_{X_i}(x_{i1})\right) \\ =  &\sum_{k \in I_{x_i}} \left(\log \left(p_{X_j,Z \mid X_i}(x_j,z \mid x_{ik})m_{X_i}(x_{ik})\right) + C_{k}\right),
    \end{aligned}
\end{equation}
which implies that
\begin{equation}
    \begin{aligned}
     & \log \left(p_{X_j,Z \mid X_i}(x_j,z \mid x_{i1}) m_{X_i}(x_{i1})\right) \\ =  &\frac{1}{{M_i}}\left(\sum_{k=1}^{{M_i}} \log \left(p_{X_j,Z \mid X_i}(x_j,z \mid x_{ik}) m_{X_i}(x_{ik})\right) + \sum_{k \in I_{x_i}}C_{k}\right).
    \end{aligned}
\end{equation}
Because $\sum_{k=1}^{M_i} \log \left(p_{X_j,Z \mid X_i}(x_j,z \mid x_{ik}) m_{X_i}(x_{ik})\right)$ covers all possible values of $k$, this term does not depend on the specific value of $X_i$. Besides, $C_{k}$ does not depend on $X_j$. Therefore, by iterating all possible differences of $X_i$, we could see that when the finite difference of the score function of $X_j$ w.r.t. $X_i$ equals zero (after some aggregation of samples), $X_i \indep X_j \mid Z$.

It is noteworthy that another ``symmetric" case, where the derivative of the discrete score function of $X_i$ w.r.t. $X_j$ equals zero, is as follows
\begin{equation} \label{eq:thm_mixed_alter}
    \begin{aligned}
        &\frac{\partial \left(\log \left( \left(p_{X_j,Z \mid X_i}(x_j,z \mid x_{i1}) m_{X_i}(x_{i1})\right) - \log \left(p_{X_j,Z \mid X_i}(x_j,z \mid x_{ik}) m_{X_i}(x_{ik})\right) \right)\right)}{\partial x_j} = 0,
    \end{aligned}
\end{equation}
which is equivalent to
\begin{equation}\label{eq:thm_mixed}
    \begin{aligned}
        &\frac{\partial \log \left(p_{X_j,Z \mid X_i}(x_j,z \mid x_{i1}) m_{X_i}(x_{i1})\right)}{\partial x_j} - \frac{\partial \log \left(p_{X_j,Z \mid X_i}(x_j,z \mid x_{ik}) m_{X_i}(x_{ik})\right)}{\partial x_j} = 0.
    \end{aligned}
\end{equation}

Thus, we only need to consider Eq. \ref{eq:thm_mixed}, which is the case that the finite difference of the discrete score function of $X_i$ w.r.t. $X_j$ equals zero.

\textbf{Necessary condition.} \ \
When $X_i \indep X_j \mid Z$, we could decompose $p_{X_j,Z \mid X_i}(x_j,z \mid x_{i1}) m_{X_i}(x_{i1})$ as $m_{X_i \mid Z}(x_{i1} \mid z)p_{X_j \mid Z}(x_j \mid z)p_{Z}(z)$. This implies that, for all $ k $ in $\{2, \ldots, {M_i}\}$, we have
\begin{equation}
    \begin{aligned}
        &\frac{\partial \log \left(p_{X_j,Z \mid X_i}(x_j,z \mid x_{i1}) m_{X_i}(x_{i1})\right)}{\partial x_j} - \frac{\partial \log \left(p_{X_j,Z \mid X_i}(x_j,z \mid x_{ik}) m_{X_i}(x_{ik})\right)}{\partial x_j}\\ 
        = &\frac{\partial \log \left(m_{X_i \mid Z}(x_{i1} \mid z)p_{X_j \mid Z}(x_j \mid z)p_{Z}(z)\right)}{\partial x_j}\\
        & - \frac{\partial \log \left(m_{X_i \mid Z}(x_{ik} \mid z)p_{X_j \mid Z}(x_j \mid z)p_{Z}(z)\right)}{\partial x_j}\\ 
        = &\frac{\partial \left(\log m_{X_i \mid Z}(x_{i1} \mid z) + \log p_{X_j \mid Z}(x_j \mid z) + \log p_{Z}(z)\right)}{\partial x_j}\\
        & - \frac{\partial \left(\log m_{X_i \mid Z}(x_{ik} \mid z) + \log p_{X_j \mid Z}(x_j \mid z) + \log p_{Z}(z)\right)}{\partial x_j}\\ 
        = &\frac{\partial\log p_{X_j \mid Z}(x_j \mid z)}{\partial x_j} - \frac{\partial \log p_{X_j \mid Z}(x_j \mid z)}{\partial x_J}\\ 
        = &0.
    \end{aligned}
\end{equation}
Therefore, when $X_i \indep X_j \mid Z$, the finite difference of the score function of $X_j$ w.r.t. to $X_i$ equal to zero.

The proof is complete.
\end{proof}




\subsection{Proof of Corollary \ref{coro:measure_mixed}}\label{sec:proof_measure_mixed}

\MeasureMixed*

\begin{proof}
According to Eq. \eqref{eq:omega_m}, we have
\begin{equation}
    \Omega^{[m]}_{i,j} \coloneqq 
    \begin{cases}
    \mathbb{E}_{\pi_{\mathbf{X}}} \left[f^{[c]}_{i,j}( \mathbf{x})^2\right]& \textrm{~if~} X_i \in \mathbf{X}_c, X_j \in \mathbf{X}_c\vspace{1ex}\\
    \mathbb{E}_{\pi_{\mathbf{X}}}\left[\sum_{k,l} {f^{[d]}}(x_{i1}, x_{ik}, x_{j1}, x_{jl}, z)^2\right]& \textrm{~if~} X_i \in \mathbf{X}_d, X_j \in \mathbf{X}_d\vspace{1ex}\\
    \mathbb{E}_{\pi_{\mathbf{X}}}\left[\sum_{k} f^{[m]}(x_i, x_{j1}, x_{jk}, z)^2\right]& \textrm{~if~} X_i \in \mathbf{X}_c, X_j \in \mathbf{X}_d\vspace{1ex}\\
    \mathbb{E}_{\pi_{\mathbf{X}}}\left[\sum_{k} f^{[m]}(x_j, x_{i1}, x_{ik}, z)^2\right]& \textrm{~if~} X_i \in \mathbf{X}_d, X_j \in \mathbf{X}_c,\\
    \end{cases}
\end{equation}
where $\mathbf{X}_c$ and $\mathbf{X}_d$ are the sets of continuous and discrete variables, respectively. We have already proved the first two cases (i.e., $\{X_i \in \mathbf{X}_c, X_j \in \mathbf{X}_c\}$ and $\{X_i \in \mathbf{X}_d, X_j \in \mathbf{X}_d\}$) in the proofs of Cor. \ref{coro:measure_continuous} and Cor. \ref{coro:measure_discrete}, respectively. So here we will focus on the other two cases. We start from the third case, where $\{X_i \in \mathbf{X}_c, X_j \in \mathbf{X}_d\}$. We have
\begin{equation}
\begin{aligned}
    f^{[m]}(x_i, x_{j1}, x_{jk}, z) = &\frac{\partial \log \left(p_{X_i, \mathbf{Z} \mid X_j}(x_i, z \mid x_{j1}) m_{X_j}(x_{j1})\right)}{\partial{x_i}}\\
    &- \frac{\partial \log \left(p_{X_i,\mathbf{Z} \mid X_j}(x_i, z \mid x_{jk}) m_{X_j}(x_{jk})\right)}{\partial{x_i}}.
\end{aligned}
\end{equation}
Thus, if $\Omega^{[m]}_{i,j} = 0$ for $\{X_i \in \mathbf{X}_c, X_j \in \mathbf{X}_d\}$, we must have
\begin{equation}
    \begin{aligned}
        &\frac{\partial \log \left(p_{X_i, \mathbf{Z} \mid X_j}(x_i, z \mid x_{j1}) m_{X_j}(x_{j1})\right)}{\partial{x_i}}\\
    &- \frac{\partial \log \left(p_{X_i,\mathbf{Z} \mid X_j}(x_i, z \mid x_{jk}) m_{X_j}(x_{jk})\right)}{\partial{x_i}} = 0,    
    \end{aligned}
\end{equation}
for all $k\in[{M_j}]$, where ${M_j}$ denotes the cardinality of $X_j$. Thus, according to Theorem \ref{thm:mixed_measure}, we have $X_i \indep X_j \mid \mathbf{Z}$ if $\Omega^{[m]}_{i,j} = 0$ for $\{X_i \in \mathbf{X}_c, X_j \in \mathbf{X}_d\}$.

The similar derivation applies for the last case, where $\{X_i \in \mathbf{X}_d, X_j \in \mathbf{X}_c\}$.
\end{proof}

\subsection{Proof of Corollary \ref{thm:consistency_mixed}}\label{sec:proof_consistency_mixed}

We first introduce the following lemmas and their proofs for completeness.

\subsubsection{Proof of Lemma \ref{thm:avoid_data_dens_est_continuous}} \label{sec:proof_avoid_data_dens_est_continuous}
\begin{restatable}{lemma}{AvoidDataDensEstContinous}[directly from Thm. 1 in \citep{hyvarinen2005estimation}] \label{thm:avoid_data_dens_est_continuous}
Assume
\begin{enumerate}[label=\roman*.]
    \item $\mathbf{X}=(X_1,\dots,X_d)$ is a set of continuous variables.
    \item The data PDF $p_{\mathbf{X}}(\mathbf{x})$ is differentiable. The model PDF $p(\mathbf{x};\theta)$ is twice-differentiable. Both of them are strictly positive.
    \item The expectations $E_{\mathbf{x}}\left\{\|\log p(\mathbf{x} ; \theta)\|^{2}\right\}$ and $E_{\mathbf{x}}\left\{\left\|{\log p}_{\mathbf{X}}(\mathbf{x})\right\|^{2}\right\}$ are finite for any $\theta$, and $p_{\mathbf{X}}(\mathbf{x}) \log p(\mathbf{x} ; \theta)$ goes to zero for any $\theta$ when $\|\mathbf{x}\| \rightarrow \infty$.
\end{enumerate}
Then Eq. \eqref{eq:ori_obj_continuous} is equivalent to
\begin{equation}\label{eq:obj_after_trick_continuous}
    O_c(\theta) = \int_{\mathbf{x} \in \mathbb{R}^{n}} p_{\mathbf{X}}(\mathbf{x}) \sum_{i=1}^{d}\left[\frac{1}{2}\|\nabla_{x_i}\log p(\mathbf{x} ; \theta)\|^2 +H_{x_i} (\log p(\mathbf{x}; \theta)) \right] d \mathbf{x} +  \rho_\lambda( \Omega^{[c]}).
\end{equation}
\end{restatable}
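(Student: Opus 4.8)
The plan is to follow the classical score matching argument of \citet{hyvarinen2005estimation}, which shows that the Fisher divergence can be rewritten so that the unknown data score $\nabla_{\mathbf{x}}\log p_{\mathbf{X}}$ is eliminated and only the model score (and samples from $p_{\mathbf{X}}$) remain. Since the penalty $\rho_\lambda(\Omega^{[c]})$ is literally identical in \eqref{eq:ori_obj_continuous} and the target expression, it suffices to establish the equivalence for the unregularized integral. Throughout I write $\psi(\mathbf{x};\theta)=\nabla_{\mathbf{x}}\log p(\mathbf{x};\theta)$ for the model score and $\psi_{\mathbf{X}}(\mathbf{x})=\nabla_{\mathbf{x}}\log p_{\mathbf{X}}(\mathbf{x})$ for the data score, with $\psi_i$ denoting their $i$-th components.

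First I would expand the squared norm inside the integral of \eqref{eq:ori_obj_continuous}, so that the integrand reads $\|\psi(\mathbf{x};\theta)\|^2 - 2\,\psi(\mathbf{x};\theta)^\top\psi_{\mathbf{X}}(\mathbf{x}) + \|\psi_{\mathbf{X}}(\mathbf{x})\|^2$. The final term $\|\psi_{\mathbf{X}}(\mathbf{x})\|^2$ does not involve $\theta$, so after integrating against $p_{\mathbf{X}}$ it contributes only an additive constant (finite by the second moment condition in assumption iii) and may be discarded without affecting the minimizer. This leaves $\tfrac{1}{2}\|\psi\|^2$ and the cross term to handle.

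The crucial step is to rewrite the cross term. Using the identity $p_{\mathbf{X}}(\mathbf{x})\,\psi_{\mathbf{X}}(\mathbf{x}) = \nabla_{\mathbf{x}} p_{\mathbf{X}}(\mathbf{x})$, the cross contribution becomes $-\int \sum_i \big(\partial_{x_i} p_{\mathbf{X}}(\mathbf{x})\big)\,\psi_i(\mathbf{x};\theta)\,d\mathbf{x}$. I would then integrate by parts in each coordinate $x_i$, transferring the derivative off $p_{\mathbf{X}}$ and onto $\psi_i$; the boundary terms vanish by the decay condition in assumption (iii), that $p_{\mathbf{X}}(\mathbf{x})\log p(\mathbf{x};\theta)\to 0$ as $\|\mathbf{x}\|\to\infty$. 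This produces $+\int p_{\mathbf{X}}(\mathbf{x}) \sum_i \partial_{x_i}\psi_i(\mathbf{x};\theta)\,d\mathbf{x} = \int p_{\mathbf{X}}(\mathbf{x})\sum_i H_{x_i}(\log p(\mathbf{x};\theta))\,d\mathbf{x}$, since $\partial_{x_i}\psi_i = \partial^2_{x_i}\log p = H_{x_i}(\log p)$. Recombining this with the retained $\tfrac{1}{2}\|\psi\|^2 = \tfrac{1}{2}\sum_i\|\nabla_{x_i}\log p\|^2$ term and restoring the penalty yields exactly \eqref{eq:obj_after_trick_continuous}, proving the equivalence.

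The main obstacle is the rigorous justification of the integration by parts: one must verify that the boundary contributions at infinity really are negligible and that every integral appearing in the manipulation is finite, so that the rearrangements are legitimate. This is precisely what assumptions (ii) and (iii) are designed to deliver—strict positivity and the appropriate differentiability make $\psi$ and $\psi_{\mathbf{X}}$ well defined, the finite second moments guarantee integrability of each piece, and the decay condition kills the boundary terms (made precise coordinate-by-coordinate via the divergence theorem on an expanding ball and a limiting argument). Everything else is routine algebraic rearrangement, so I would concentrate the effort on this limiting argument and simply cite \citet{hyvarinen2005estimation} for the remaining bookkeeping.
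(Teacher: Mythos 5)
Your proof is correct and follows essentially the same route as the paper's: expand the square, discard the $\theta$-independent data-score term, rewrite the cross term via $p_{\mathbf{X}}\nabla_{\mathbf{x}}\log p_{\mathbf{X}}=\nabla_{\mathbf{x}} p_{\mathbf{X}}$, and integrate by parts coordinate-by-coordinate with boundary terms killed by the decay assumption. Your bookkeeping is in fact slightly cleaner: you correctly note that the transferred derivative lands on the model score, producing $H_{x_i}(\log p(\mathbf{x};\theta))$, whereas the paper's intermediate displays mistype this as $\partial^2 \log p_{\mathbf{X}}/\partial x_i^2$ before recovering the correct Hessian of the model log-density in the final expression.
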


\begin{proof}
Based on Eq. \eqref{eq:ori_obj_continuous}, we have
\begin{equation}
    O_c(\theta) = \frac{1}{2} \int p_{\mathbf{x}}(\x) |\nabla_{\x}\log p(\x ; \theta) - \nabla_{\x}\log p_{\mathbf{x}}(\x)|^2  d \x + \rho_\lambda( \Omega^{[c]}),
\end{equation}
where $\rho_\lambda(\cdot)$ denotes a sparsity penalty function and $\lambda$ is the penalty parameter. This is equivalent to
\begin{equation} \label{eq:open_the_square}
\begin{aligned}
        O_c(\theta) = &\frac{1}{2} \int p_{\mathbf{X}}(\x) \left( \|\nabla_{\mathbf{x}}\log p(\mathbf{x} ; \theta)\|^2 + \|\nabla_{\mathbf{x}}\log p_{\mathbf{X}}(\mathbf{x})\|^2\right.\\
        &\left.-  2\left(\nabla_{\x}\log p_{\X}(\x)\right)^{\top} \left(\nabla_{\x}\log p(\x ; \theta)\right) \right)  d \x + \rho_\lambda( \Omega^{[c]}).
\end{aligned}
\end{equation}
We first consider the integral for the following part
\begin{equation}
    - \int p_{\X}(\x)\left(\nabla_{\x}\log p_{\X}(\x)\right)^{\top} \left(\nabla_{\x}\log p(\x ; \theta)\right) d\x,
\end{equation}
by which we could obtain
\begin{equation}
\begin{aligned}
    &- \sum_{i} \int p_{\X}(\x)\left(\nabla_{x_i}\log p_{\X}(\x)\right) \left(\nabla_{x_i}\log p(\x ; \theta)\right) d\x \\
    = & - \sum_{i} \int  \left(\nabla_{x_i} p_{\X}(\x)\right) \left(\nabla_{x_i}\log p(\x ; \theta)\right) d\x \\
    = & - \sum_{i} \int \left[ \int \nabla_{x_i} p_{\X}(\x) \left(\nabla_{x_i}\log p(\x ; \theta)\right) d{x_1} \right] d{(x_1,\ldots, x_d)}\\
    \stackrel{(\star)}{=} & - \sum_{i} \int {\left[\lim _{a \rightarrow \infty, b \rightarrow-\infty} \left[ p_{\X}\left(a, x_2, \ldots, x_d\right) \nabla_{x_i}\log p(a, x_2, \ldots, x_d, \theta) \right.\right.}\\
    & \left. - p_{\X}\left(b, x_2, \ldots, x_n\right) \nabla_{x_i}\log p(b, x_2, \ldots, x_d, \theta)\right] \\
    & \left. - \int \frac{\partial^2 \log p_{\X}}{\partial{x_i}^2} p_\X(\x) d{x_1} \right] d(x_2,\ldots,x_d),
\end{aligned}
\end{equation}
where Eq. $(\star)$ is because if we assume $f$ and $g$ are both differential, we have
\begin{equation}
    \frac{\partial f(\x) g(\x)}{\partial x_i}=f(\x) \frac{\partial g(\x)}{\partial x_1}+g(\x) \frac{\partial f(\x)}{\partial x_1}.
\end{equation}
For $i \neq 1$, the cases follow similarly. Because we assume $p_{\X}(\x) \log p(\x ; \theta)$ goes to zero for any $\theta$ when $\|\x\| \rightarrow \infty$, the limit is zero. Thus, we have proven that
\begin{equation}
    \begin{aligned}
        - \sum_{i} \int p_{\X}(\x)\nabla_{x_i}\log p_{\X}(\x) \left(\nabla_{x_i}\log p(\x ; \theta)\right) d\x = \sum_{i} \int \frac{\partial^2 \log p_{\X}}{\partial{x_i}^2} p_\X(\x) d{\x},
    \end{aligned}
\end{equation}
By injecting it into Eq. \eqref{eq:open_the_square}, we obtain
\begin{equation}
\begin{aligned}
        O_c(\theta) = & \int  p_{\X}(\x) {\left[ \frac{1}{2}\|\nabla_{\x}\log p(\x ; \theta)\|^2 + \frac{1}{2}\|\nabla_{\x}\log p_{\X}(\x)\|^2  +\operatorname{tr}\left(H_\x (\log p(\x; \theta))\right)  \right]} d\x\\
        &+ \rho_\lambda( \Omega^{[c]}).
\end{aligned}
\end{equation}
Because $\frac{1}{2}\|\nabla_{\x}\log p_{\X}(\x)\|^2$ does not depend on $\theta$, we could ignore it. Then we have
\begin{equation}
\begin{aligned}
        O_c(\theta) = \int \sum_{i=1}^{d}\left[\frac{1}{2}\|\nabla_{x_i}\log p(\x ; \theta)\|^2 +H_{x_i} (\log p(\x; \theta)) \right] d \x +  \rho_\lambda( \Omega^{[c]}).
\end{aligned}
\end{equation}
Thus, the proof is complete.
\end{proof}

\subsubsection{Proof of Lemma \ref{thm:avoid_data_dens_est_discrete}} \label{sec:proof_avoid_data_dens_est_discrete}

\begin{restatable}{lemma}{AvoidDataDensEstDiscrete}[directly from \citep{lyu2012interpretation}] \label{thm:avoid_data_dens_est_discrete}
Assume
\begin{enumerate}[label=\roman*.]
    \item $\mathbf{X}=(X_1,\dots,X_d)$ is a set of discrete variables.
    \item The data PMF $m_{\X}(\x)$ and the model PMF $m(\x;\theta)$ are strictly positive.
\end{enumerate}
Then Eq. \eqref{eq:obj_generalized_margi_discrete} is equivalent to
\begin{equation}\label{eq:obj_after_trick_discrete}
    O_d(\theta) = \sum_{\x} m_{\X}(\x) \sum_{i=1}^d \left[\left(\frac{\mathcal{M}_i(m(\x ; \theta))}{m(\x ; \theta)}\right)^2 - 2 \mathcal{M}_i\left(\frac{\mathcal{M}_i(m(\x ; \theta))}{m(\x ; \theta)}\right)\right] + \rho_\lambda(\Omega^{[d]}).
\end{equation}
\end{restatable}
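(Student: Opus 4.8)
The plan is to mirror the proof of the continuous counterpart, Lemma \ref{thm:avoid_data_dens_est_continuous}, replacing the integration-by-parts step by its discrete analogue for the marginalization operator $\mathcal{M}$, following \citet{lyu2012interpretation}. First I would expand the squared norm inside Eq. \eqref{eq:obj_generalized_margi_discrete}. Writing $r_i(\x) = \mathcal{M}_i(m(\x;\theta))/m(\x;\theta)$ and $s_i(\x) = \mathcal{M}_i(m_{\X}(\x))/m_{\X}(\x)$, the summand over $i$ splits into three pieces: the model term $r_i(\x)^2$, the cross term $-2\,r_i(\x)\,s_i(\x)$, and the data term $s_i(\x)^2$. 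Since $s_i$ depends only on the data PMF $m_{\X}$ and not on $\theta$, the term $\sum_{\x} m_{\X}(\x)\sum_i s_i(\x)^2$ is a constant that can be discarded without affecting the minimizer, exactly as the data-score term $\tfrac12\|\nabla_{\x}\log p_{\X}\|^2$ is dropped in the continuous proof.

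The crux is the cross term $-\sum_{\x} m_{\X}(\x)\sum_i r_i(\x)\,s_i(\x)$. I would first cancel the factor $m_{\X}(\x)$ against the denominator of $s_i$, leaving $-\sum_i\sum_{\x} r_i(\x)\,\mathcal{M}_i(m_{\X})(\x)$. The key observation is that the marginal $\mathcal{M}_i(m_{\X})$ is constant along the $i$-th coordinate. Fixing $\x^{\backslash i}$ and summing over $x_i$ therefore lets me pull $\mathcal{M}_i(m_{\X})$ out of the inner sum and recognize $\sum_{x_i} r_i(\x) = \mathcal{M}_i(r_i)(\x^{\backslash i})$, which is itself constant in $x_i$. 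Re-expressing $\mathcal{M}_i(m_{\X})(\x^{\backslash i}) = \sum_{x_i} m_{\X}(\x)$ then folds the outer sum back into an expectation under $m_{\X}$, yielding $-\sum_{\x} m_{\X}(\x)\sum_i \mathcal{M}_i\!\bigl(r_i\bigr)(\x)$. This rearrangement of the double sum is precisely the discrete substitute for integration by parts, and it is available because $\mathcal{M}$ is the (complete) marginalization operator of Defn. \ref{defn:complete}.

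Finally I would reassemble the pieces: the surviving model term contributes $\sum_{\x} m_{\X}(\x)\sum_i \tfrac12 r_i(\x)^2$, the manipulated cross term contributes $-\sum_{\x} m_{\X}(\x)\sum_i \mathcal{M}_i(r_i)(\x)$, and the penalty $\rho_\lambda(\Omega^{[d]})$ is left untouched; collecting them reproduces Eq. \eqref{eq:obj_after_trick_discrete}. I expect the main obstacle to be the bookkeeping of the cross-term identity above: one must track carefully that $\mathcal{M}_i(m(\cdot;\theta))$ and $\mathcal{M}_i(m_{\X})$ are constant along the $i$-th axis while $r_i$ and $s_i$ are not, so that both the reordering of the summations over $x_i$ and $\x^{\backslash i}$ and the final re-insertion of $m_{\X}$ are legitimate. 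Once that identity is secured, the remainder is routine algebra.
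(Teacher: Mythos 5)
Your proof is correct and takes essentially the same route as the paper's: expand the squared norm, discard the data term $\bigl(\mathcal{M}_i(m_{\X}(\x))/m_{\X}(\x)\bigr)^2$ because it does not depend on $\theta$, and convert the cross term into $-2\mathcal{M}_i\bigl(\mathcal{M}_i(m(\x;\theta))/m(\x;\theta)\bigr)$. If anything you are more complete than the paper, which justifies its step $(\star)$ only by the $\theta$-independence of the data term and leaves the cross-term manipulation to the citation of \citet{lyu2012interpretation}, whereas you explicitly establish the summation-by-parts identity $\sum_{\x} r_i(\x)\,\mathcal{M}_i(m_{\X})(\x) = \sum_{\x} m_{\X}(\x)\,\mathcal{M}_i(r_i)(\x)$ using the fact that $\mathcal{M}_i(\cdot)$ is constant along the $i$-th coordinate.
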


\begin{proof}
Based on Eq. \eqref{eq:obj_generalized_margi_discrete}, we have
\begin{equation} 
    O_d(\theta) =  \sum_{\x} m_{\X}(\x) \left\|\frac{\mathcal{M}(m(\x ; \theta))}{m(\x ; \theta)} - \frac{\mathcal{M}(m_{\X}(\x))}{m_{\X}(\x)}\right\|^2 + \rho_\lambda(\Omega^{[d]}),
\end{equation}
which implies
\begin{equation}
\begin{aligned}
    O_d(\theta) &= \sum_{\x} m_{\X}(\x) \sum_{i=1}^d \left\|\frac{\mathcal{M}_i(m(\x ; \theta))}{m(\x ; \theta)} - \frac{\mathcal{M}_i(m_{\X}(\x))}{m_{\X}(\x)}\right\|^2 + \rho_\lambda(\Omega^{[d]})\\
    &\stackrel{(\star)}{=} \sum_{\x} m_{\X}(\x) \sum_{i=1}^d \left[\left(\frac{\mathcal{M}_i(m(\x ; \theta))}{m(\x ; \theta)}\right)^2 - 2 \mathcal{M}_i\left(\frac{\mathcal{M}_i(m(\x ; \theta))}{m(\x ; \theta)}\right)\right] + \rho_\lambda(\Omega^{[d]}),
\end{aligned}
\end{equation}
where Eq. $(\star)$ is due to the fact that $\left(\frac{\mathcal{M}_i(m_{\X}(\x))}{m_{\X}(\x)}\right)^2$ does not take $\theta$ as an argument.

The proof is complete.
\end{proof}


Then the corollary follows from these lemmas, which is included as follows.

\ConsistencyMixed*

\begin{proof}
The $O'_m(\theta)$ is defined as follows
\begin{equation}
        O'_m(\theta) = \mathbb{E}_{\pi_{\mathbf{X}}} \left[\sum_{i} s_i(\x;\theta)\right],
\end{equation}
where
\begin{equation}
    \begin{aligned}
         s_i(\mathbf{x};\theta) \coloneqq 
         \begin{cases}
         \frac{1}{2}\|\nabla_{x_i}\log \pi(\mathbf{x} ; \theta)\|^2 +H_{x_i} (\log \pi(\mathbf{x}; \theta))& X_i \in \mathbf{X}_c\vspace{1ex}\\  
         \frac{1}{2} \left(\frac{\mathcal{M}_i(m(\mathbf{x} ; \theta ))}{m(\mathbf{x}; \theta )}\right)^{2} - \mathcal{M}_i\left( \frac{\mathcal{M}_i(m(\mathbf{x} ; \theta ))}{m(\mathbf{x}; \theta )}\right)& X_i \in \mathbf{X}_d,
         \end{cases}
    \end{aligned}
\end{equation}
where the probability function $\pi$ is strictly positive. According to Lemma \ref{thm:avoid_data_dens_est_continuous} and Lemma \ref{thm:avoid_data_dens_est_discrete}, both cases of $s_i(\cdot;\theta)$ in $O'_m(\theta)$ are equivalent to     $\frac{1}{2} \mathbb{E}_{\pi} \left[\left\|\frac{g(\pi(\cdot ; \theta))}{\pi(\cdot ; \theta)} - \frac{g(\pi(\cdot))}{\pi(\cdot)}\right\|^2\right]$,
where $g$ denotes the gradient operator for continuous variables or the marginalization operator for discrete variables. If $O'_m(\theta) = 0$, $s_i(x;\theta)$ must equal to zero for any $i$. Because of Brook's Lemma \citep{brook1964distinction}, which is also included in \citet{lyu2012interpretation} as Lemma 3, the marginalization operator $\mathcal{M}$ is complete (Defn. \ref{defn:complete}). Thus, for the discrete variables, we could replace the gradient in the continuous score function with the marginalization operator while preserving local consistency as that for the continuous variables, which is shown by Theorem 2 in \citet{hyvarinen2005estimation}.
\end{proof}


\section{Experiments} \label{sec:appx_exp}

\subsection{Generating process for mixed-type data}

For the mixed-type case, we simulate data with the process described in \citet{andrews2018scoring}, of which the details are included here for completeness. After generating a random decomposable DAG, we first assign a data type (continuous or discrete) to each variable with equal probability. For variables without parents in the ground-truth graph, we sample their values from Gaussian and Multinomial distributions, respectively. Then for each continuous variable, we create a temporary discretized version by applying equal frequency binning. The number of bins is uniformly chosen between and including 2 and 5. The cardinality of each discrete variable is uniformly chosen between and including 2 and 4. The randomly generated decomposable DAGs are moralized to obtain the ground-truth Markov network structures.

Next, for variables with parents in the ground-truth graph, we sample the values of them as follows. For continuous variables, we first adopt partitioning according to its discrete variables. Then the values of these continuous variables are generated by randomly parameterizing the coefficients of a regression for each partition. For discrete variables, we generate the values of them by randomly parameterizing Multinomial distributions of the variables of the target variable and its discrete parents (temporary or not). After the simulation, all temporary discretized variables are removed.

\subsection{Influence of the sample size}

In this section, we report additional experimental results with a larger sample size. We conduct experiments for all settings (continuous, discrete, and mixed-type) with different numbers of variables ($d \in \{4,6,\ldots, 20\}$) and $10000$ samples. The results are summarized in Fig. \ref{fig:comparison_10000_continuous}, Fig. \ref{fig:comparison_comparison_10000_discrete}, and Fig. \ref{fig:comparison_comparison_10000_mixed}. One can observe that both KCI and GS fail in all settings, indicating that they cannot scale well with large sample sizes. It is because the complexities of KCI and GS grow cubically in the number of samples, which is one of the motivations for the development of our method. Besides, SING cannot scale with more than 6 variables because of OOM. At the same time, our method works well across all datasets without any scalability issues. Together with the better performance illustrated in Sec. \ref{sec:experiments} (note that one can even go beyond 5000 variables, e.g., it takes 7725 seconds for 10,000 variables in our setting), we believe the potential of our method is not only theoretically exciting but also empirically clear in both consistency and scalability.

\begin{figure}[h]
    \centering
    \includegraphics[width=0.66\linewidth]{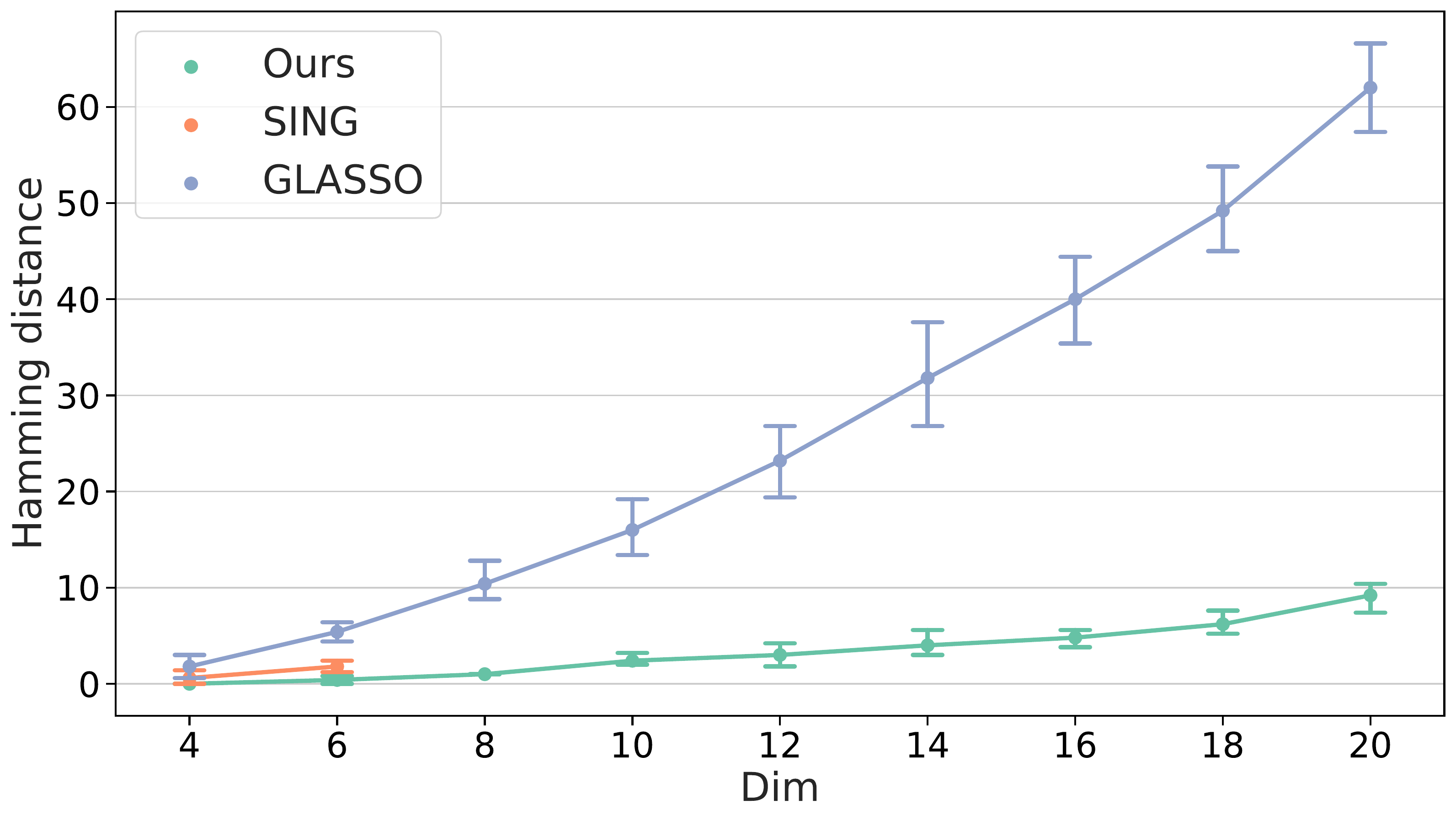}
    \caption{Hamming distances for continuous data, $n=10000$.}
    \label{fig:comparison_10000_continuous}
\end{figure}

\begin{figure}[h]
    \centering
    \includegraphics[width=0.66\linewidth]{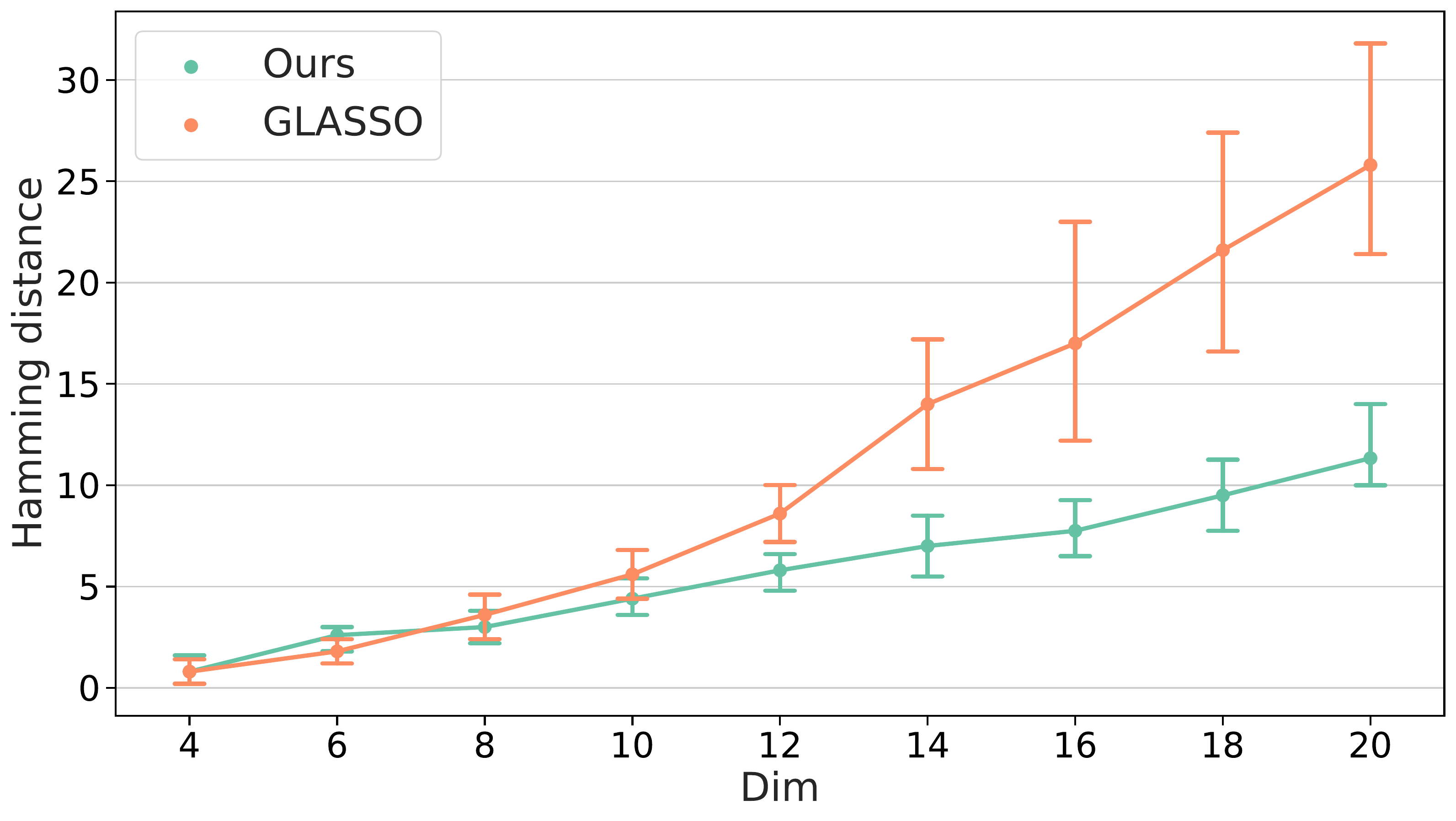}
    \caption{Hamming distances for discrete data, $n=10000$.}
    \label{fig:comparison_comparison_10000_discrete}
\end{figure}

\begin{figure}[h]
    \centering
    \includegraphics[width=0.66\linewidth]{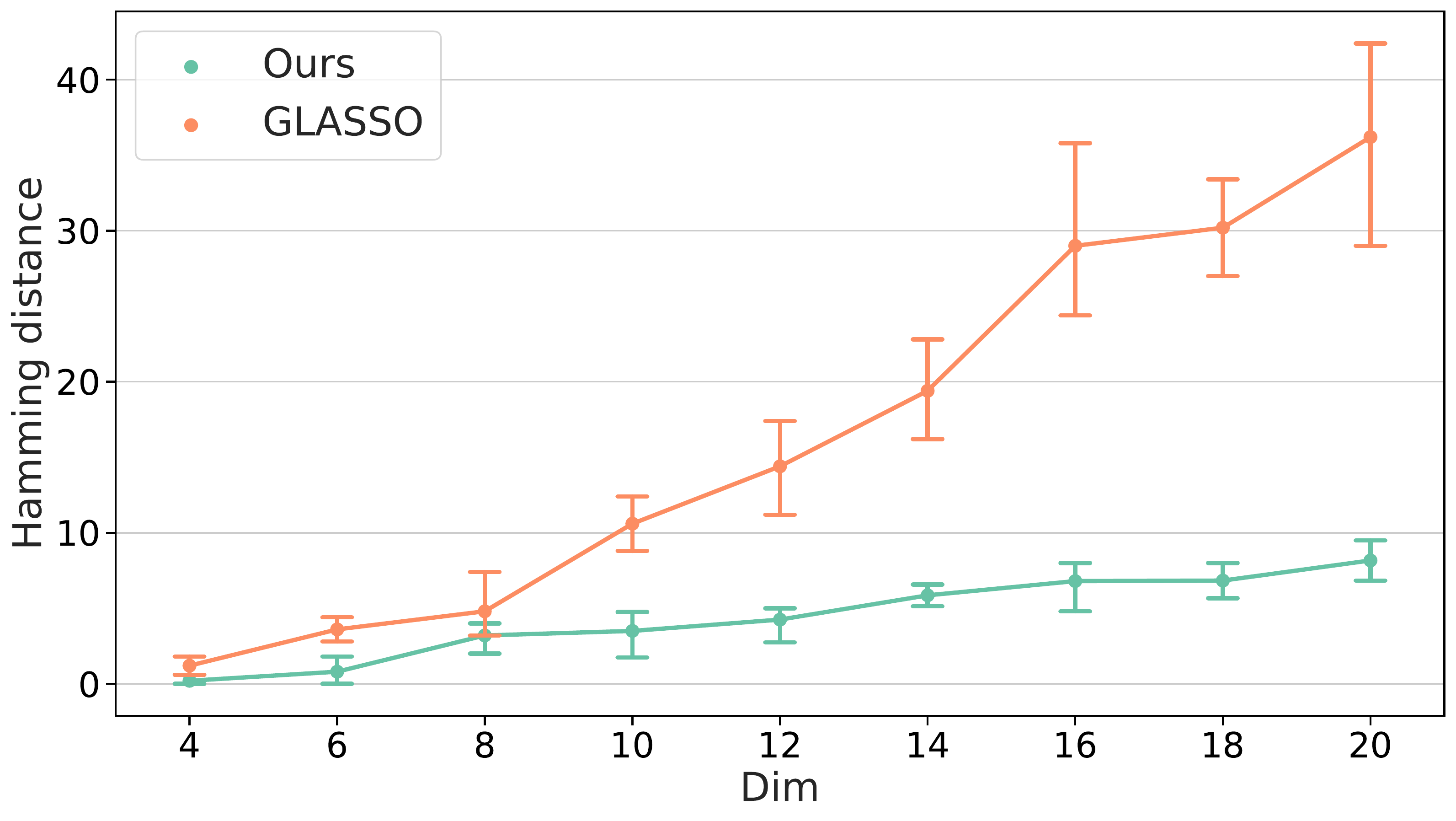}
    \caption{Hamming distances for mixed-type data, $n=10000$.}
    \label{fig:comparison_comparison_10000_mixed}
\end{figure}



\subsection{Influence of different penalty functions}

\begin{figure}[h]
    \centering
    \includegraphics[width=0.66\linewidth]{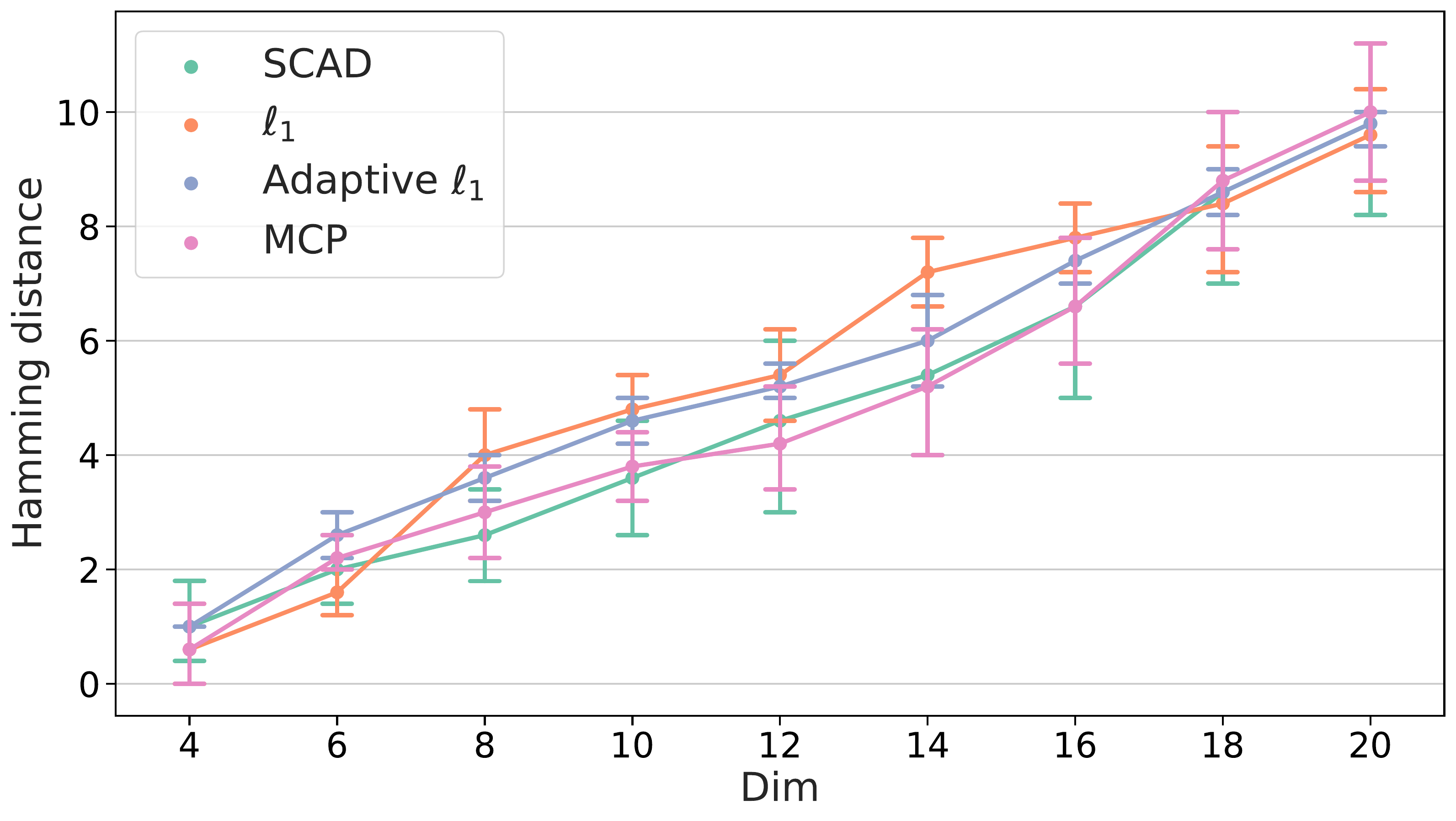}
    \caption{Hamming distances w.r.t. different sparsity penalty functions and numbers of variables.}
    \label{fig:comparison_sparsity_regularization}
\end{figure}

\looseness=-1
To explore the effect of different regularization functions, we compare the results of our method with different sparsity penalties, which are shown in Fig. \ref{fig:comparison_sparsity_regularization}. The experiments are conducted on Butterfly distributions with the number of continuous variables ranging from 4 to 20 and a sample size of 1000. One could observe that SCAD and MCP outperform other penalties, while SCAD performs slightly better than MCP in general. Adaptive $\ell_1$ \citep{zou2006adaptive} also illustrates its advantage compared to the original $\ell_1$ penalty. This suggests the importance of appropriate penalty functions.

\section{Discussion}
\subsection{Towards nonparametric causal discovery}
\label{sec:appx_disc}


In this section, we briefly discuss the implication of our proposed Markov network estimation method in causal discovery, of which the goal is to learn graphical models with causal interpretations.

The major classes of approaches for causal discovery are constraint-based approaches that utilize conditional independence tests and score-based approaches that optimize a specific score function. Among them, PC \citep{Spirtes1991pc} with kernel-based conditional independence test \citep{zhang2012kernel} and GES \citep{Chickering2002optimal} with generalized score \citep{huang2018generalized} are able to handle nonparametric cases with assumptions such as causal sufficiency. Both of these approaches rely on kernel methods whose computational complexity is cubic w.r.t. the number of samples. Therefore, the running time could be long if the sample size is large. Furthermore, when the number of variables is large, the search procedure may involve computing the kernel-based conditional independence test or score function many times, which therefore may also increase the running time.



\looseness=-1
As shown by \citet{Loh2014high,Ng2021reliable} in the linear Gaussian case, the Markov network (i.e., the support of the inverse covariance matrix of the distribution) is guaranteed to be the super-structure of the ground truth directed acyclic graph (DAG) under a specific type of faithfulness assumption. That is, the super-structure contains all edges of the true DAG. Using this idea, they showed that the Markov network may be used to restrict the search space of score-based approaches for causal discovery, which improves the scalability. Their works focus only on the linear case and adopt classical methods like graphical Lasso \citep{friedman2008sparse} to estimate the Markov network.
In this work, the nonparametric Markov network estimated by our proposed procedure could potentially be used as a super-structure to restrict the search space for nonparametric causal discovery methods, i.e., (kernel-based) PC and GES. Similar to \citep{Loh2014high,Ng2021reliable}, this may help reduce the running time and improve the scalability of these methods.

\end{document}